\newcommand{\note}[1]{}
\newcommand{\be}{\begin{eqnarray} \begin{aligned}}
\newcommand{\ee}{\end{aligned} \end{eqnarray} }
\newcommand{\benn}{\begin{eqnarray*} \begin{aligned}}
\newcommand{\eenn}{\end{aligned} \end{eqnarray*} }
\def\thm@space@setup{%
  \thm@preskip=\parskip \thm@postskip=0pt
}
\newtheorem{theorem}{Theorem}[section]
\newtheorem*{theorem*}{Theorem}
\newtheorem{corollary}[theorem]{Corollary}
\newenvironment{definition}[1][Definition]{\begin{trivlist}
\item[\hskip \labelsep {\bfseries #1}]}{\end{trivlist}}
\titlespacing*{\section} {0pt}{1.5ex plus 0ex minus 1ex}{0.5ex plus .2ex minus 0.5ex}
\titlespacing*{\subsection} {0pt}{1ex plus 0ex minus .5ex}{0.5ex plus 0ex minus 0.5ex}
\titlespacing*{\subsubsection}{0pt}{0ex plus 0ex minus .5ex}{0.ex plus 0ex minus 0.5ex}
\titlespacing\paragraph{0pt}{0.ex plus 0.ex minus 0ex}{1em}
\begin{document}
\setlength{\abovedisplayskip}{0.5mm}
\setlength{\belowdisplayskip}{0.5mm}

\runningtitle{Maximally Informative Hierarchical Representations of High-Dimensional Data}

\twocolumn[

\aistatstitle{Maximally Informative Hierarchical Representations \\ of High-Dimensional Data}

\aistatsauthor{ Greg Ver Steeg \And Aram Galstyan}

\aistatsaddress{ Information Sciences Institute \\ University of Southern California \\ gregv@isi.edu 
\And Information Sciences Institute \\ University of Southern California \\ galstyan@isi.edu } ]


\begin{abstract}
We consider a set of probabilistic functions of some input variables as a \emph{representation} of the inputs. 
We present bounds on how informative a representation is about input data. 
We extend these bounds to hierarchical representations so that we can quantify the contribution of each layer towards capturing the information in the original data. 
The special form of these bounds leads to a simple, bottom-up optimization procedure to construct hierarchical representations that are also maximally informative about the data.
This optimization has linear computational complexity and constant sample complexity in the number of variables.  
These results establish a new approach to unsupervised learning of deep representations that is both principled and practical. 
We demonstrate the usefulness of the approach on both synthetic and real-world data. 
\end{abstract}

This paper considers the problem of unsupervised learning of hierarchical representations from high-dimensional data. 
Deep representations are becoming increasingly indispensable for solving the greatest challenges in machine learning including problems in image recognition, speech, and language~\cite{bengioreview}. 
Theoretical foundations have not kept pace, making it difficult to understand why existing methods fail in some domains and succeed in others. 
Here, we start from the abstract point of view that any probabilistic functions of some input variables constitute a representation. The usefulness of a representation depends on (unknown) details about the data distribution. 
Instead of making assumptions about the data-generating process or directly minimizing some reconstruction error, we consider the simple question of how informative a given representation is about the data distribution. 
We give rigorous upper and lower bounds characterizing the informativeness of a representation.
We show that we can efficiently construct representations that optimize these bounds. Moreover, we can add layers to our representation from the bottom up to achieve a series of successively tighter bounds on the information in the data. 
The modular nature of our bounds even allows us to separately quantify the information contributed by each learned latent factor, leading to easier interpretability than competing methods~\cite{richer}. 

Maximizing informativeness of the representation is an objective that is meaningful and well-defined regardless of details about the data-generating process. 
By maximizing an information measure instead of the likelihood of the data under a model, our approach could be compared to lossy compression~\cite{tishby} or coarse-graining~\cite{wolpert2014}. Lossy compression is usually defined in terms of a distortion measure. Instead, we motivate our approach as maximizing the multivariate mutual information (or ``total correlation''~\cite{watanabe}) that is ``explained'' by the representation~\cite{nips2014}. The resulting objective could also be interpreted as using a distortion measure that preserves the most redundant information in a high dimensional system. 
Typically, optimizing over \emph{all probabilistic functions} in a high-dimensional space would be intractable, but the special structure of our objective leads to an elegant set of self-consistent equations that can be solved iteratively.

The theorems we present here establish a foundation to information-theoretically measure the quality of hierarchical representations.
This framework leads to an innovative way to build hierarchical representations with theoretical guarantees in a computationally scalable way. 
Recent results based on the method of Correlation Explanation (CorEx) as a principle for learning~\cite{nips2014} appear as a special case of the framework introduced here. 
CorEx has demonstrated excellent performance for unsupervised learning with data from diverse sources including human behavior, biology, and language~\cite{nips2014} and was able to perfectly reconstruct synthetic latent trees orders of magnitude larger than competing approaches~\cite{tree_survey}.  
After introducing some background in Sec.~\ref{sec:background}, we state the main theorems in Sec.~\ref{sec:bounds} and how to optimize the resulting bounds in Sec.~\ref{sec:optimal}. We show how to construct maximally informative representations in practice in Sec.~\ref{sec:data}. We demonstrate these ideas on synthetic data and real-world financial data in Sec.~\ref{sec:experiments} and conclude in Sec.~\ref{sec:conclusion}. 

\section{Background}\label{sec:background}
Using standard notation~\cite{cover}, capital $X_i$ denotes a random variable taking values in some domain $\mathcal X_i$ and whose instances are denoted in lowercase, $x_i$. We abbreviate multivariate random variables, $X \equiv X_1,\ldots,X_n$, with an associated probability distribution, $p_X(X_1=x_1,\ldots, X_n=x_n)$, which is typically abbreviated to $p(x)$.  We will index different groups of multivariate random variables with superscripts and each multivariate group, $Y^k$, may consist of a different number of variables, $m_k$, with $Y^k \equiv Y^k_1,\ldots, Y^k_{m_k}$(see Fig.~\ref{fig:hierarchy}). The group of groups is written, $Y^{1:r} \equiv Y^1,\ldots, Y^r$.  Latent factors, $Y_j$, will be considered discrete but the domain of the  $X_i$'s is not restricted.

Entropy is defined in the usual way as $H(X) \equiv \mathbb E_X [ \log 1/p(x)]$. We use natural logarithms so that the unit of information is nats. 
Higher-order entropies can be constructed in various ways from this standard definition. For instance, the mutual information between two random variables, $X_1$ and $X_2$ can be written $I(X_1:X_2) = H(X_1) + H(X_2) - H(X_1,X_2)$. Mutual information can also be seen as the reduction of uncertainty in one variable, given information about the other, $ I(X_1:X_2) = H(X_1) - H(X_1|X_2)$. 

The following measure of mutual information among many variables was first introduced as ``total correlation''~\cite{watanabe} and is also called multi-information~\cite{multiinformation} or multivariate mutual information~\cite{kraskov_cluster}. 
\be\label{eq:tc}
TC(X) &\equiv \sum_{i=1}^n H(X_i) - H(X)  \\
&= D_{KL}\left(p(x) || \prod_{i=1}^n p(x_i)\right)
\ee
Clearly, $TC(X)$ is non-negative since it can be written as a KL divergence. For $n=2$, $TC(X)$ corresponds to the mutual information, $I(X_1:X_2)$. While we use the original terminology of ``total correlation'', in modern terms it would be better described as a measure of total dependence. $TC(X)$ is zero if and only if all the $X_i$'s are independent. 

The total correlation among a group of variables, $X$, after conditioning on some other variable, $Y$, can be defined in a straightforward way. 
\benn
TC(X|Y) &\equiv \sum_{i} H(X_i|Y) - H(X|Y) \\
&= D_{KL}\left(p(x|y) || \prod_{i=1}^n p(x_i|y)\right)
\eenn
We can measure the extent to which $Y$ (approximately) {\it explains} the correlations in $X$ by looking at how much the total correlation is reduced, 
\be\label{eq:tcxy}
TC(X;Y) &\equiv TC(X) - TC(X|Y) \\
&= \sum_{i=1}^n I(X_i:Y) - I(X:Y).
\ee
We use semicolons as a reminder that $TC(X;Y)$ is not symmetric in the arguments, unlike mutual information. $TC(X|Y)$ is zero (and $TC(X;Y)$ maximized) if and only if the distribution of $X$'s conditioned on $Y$ factorizes. This would be the case if $Y$ contained full information about all the common causes among $X_i$'s in which case we recover the standard statement, an exact version of the one we made above, that $Y$ {\it explains} all the correlation in $X$.
  $TC(X|Y)=0$ can also be seen as encoding conditional independence relations and is therefore relevant for constructing graphical models~\cite{pearl}. 
  This quantity has appeared as a measure of the {\it redundant} information that the $X_i$'s carry about $Y$~\cite{synergy} and this interpretation has been explored in depth~\cite{williamsbeer,griffith}. 

\section{Representations and Information}\label{sec:bounds}

\begin{figure}[tbp] 
   \centering
   \includegraphics[width=0.65\columnwidth]{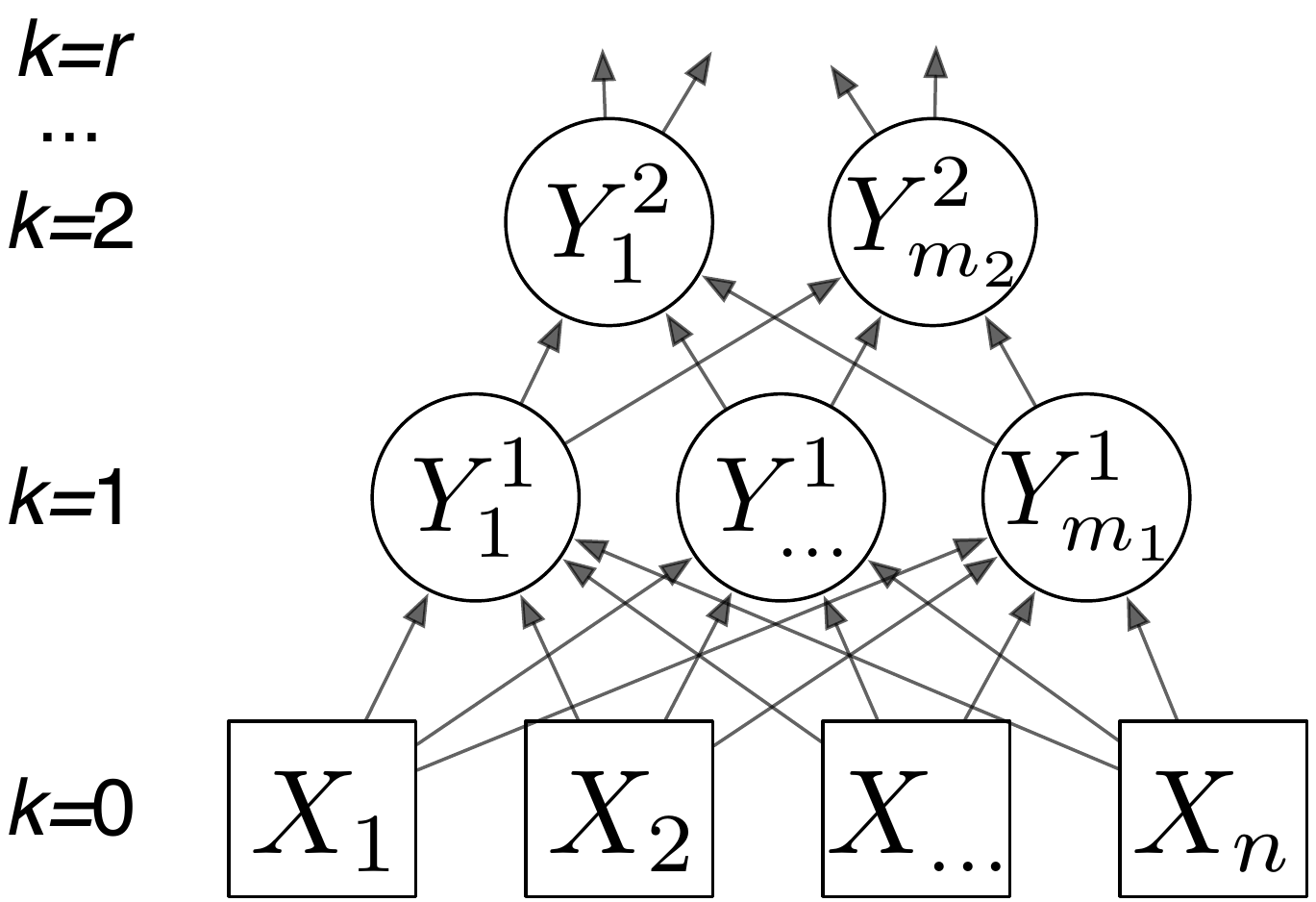} 
   \caption{In this graphical model, the variables on the bottom layer ($X_i$'s) represent observed variables. The variables in each subsequent layer represent coarse-grained descriptions that explain the correlations in the layer below. Thm.~\ref{hierarchy} quantifies how each layer contributes to successively tighter bounds on $TC(X)$.}
   \label{fig:hierarchy}
\end{figure}

\begin{definition}
The random variables $Y \equiv Y_1,\ldots,Y_m$ constitute a \emph{representation} of $X$ if the joint distribution factorizes, $p(x,y) = \prod_{j=1}^m p(y_j|x) p(x), \forall x \in \mathcal X, \forall j \in \{1,\ldots,m\}, \forall y_j \in \mathcal Y_j$. A representation is completely defined by the domains of the variables and the conditional probability tables, $p(y_j|x)$.
\end{definition}
\begin{definition}
The random variables $Y^{1:r} \equiv Y^1,\ldots,Y^r$ consitute a \emph{hierarchical representation} of $X$ if $Y^1$ is a representation of $X$ and $Y^k$ is a representation of $Y^{k-1}$ for $k=2,\ldots,r$. (See Fig.~\ref{fig:hierarchy}.) 
\end{definition}
We will be particularly interested in bounds quantifying how informative $Y^{1:r}$ is about $X$. These bounds will be used to search for representations that are maximally informative. 
These definitions of representations are quite general and include (two-layer) RBMs and deterministic representations like auto-encoders as a special case. Note that this definition only describes a prescription for generating coarse-grained variables ($Y$'s) and does not specify a generative model for $X$.

\begin{theorem}
\emph{Basic Decomposition of Information}
\label{basic}
If $Y$ is a representation of $X$ and we define,
\begin{align}\label{eq:tcl}
TC_{L} (X; Y) &\equiv  \sum_{i=1}^n I(Y:X_i) - \sum_{j=1}^m I(Y_j:X),
\end{align}
then the following bound and decomposition holds. 
\be\label{eq:basic}
TC(X) \geq TC(X;Y) = TC(Y) + TC_L(X;Y)
\ee
\end{theorem}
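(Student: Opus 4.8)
The plan is to treat the inequality and the equality in \eqref{eq:basic} as two separate facts arising from different inputs. For the bound $TC(X) \ge TC(X;Y)$, I would simply combine the definition $TC(X;Y) = TC(X) - TC(X|Y)$ from \eqref{eq:tcxy} with the observation already recorded above that $TC(X|Y)$ is a conditional KL divergence, hence non-negative; subtracting a non-negative quantity from $TC(X)$ proves the bound, with equality precisely when $p(x|y)$ factorizes over the $X_i$'s.

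For the decomposition $TC(X;Y) = TC(Y) + TC_L(X;Y)$, the key structural ingredient is the definition of a representation itself: $p(x,y) = \prod_{j=1}^m p(y_j|x)\, p(x)$ says exactly that the $Y_j$'s are mutually conditionally independent given $X$, i.e. $TC(Y|X) = 0$. Applying the identity \eqref{eq:tcxy} with the two argument groups interchanged then gives
\be
TC(Y) &= TC(Y) - TC(Y|X) = TC(Y;X) \\
&= \sum_{j=1}^m I(Y_j:X) - I(X:Y).
\ee
This is the only place the representation hypothesis enters, and I expect it to be the conceptual crux: one must apply the conditional-independence reduction to $TC(Y;X)$ (it is the $Y_j$'s, not the $X_i$'s, that are conditionally independent given the other group), so keeping the asymmetric semicolon notation straight is the main thing to watch.

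The remainder is bookkeeping. Starting from the mutual-information form $TC(X;Y) = \sum_{i=1}^n I(X_i:Y) - I(X:Y)$ in \eqref{eq:tcxy}, I would add and subtract $\sum_{j=1}^m I(Y_j:X)$ and regroup: the combination $\sum_{i=1}^n I(X_i:Y) - \sum_{j=1}^m I(Y_j:X)$ is by definition $TC_L(X;Y)$ in \eqref{eq:tcl}, while $\sum_{j=1}^m I(Y_j:X) - I(X:Y)$ equals $TC(Y)$ by the displayed identity above; adding the two pieces yields the claim. As a sanity check I would verify the degenerate case $m=1$, where $TC(Y)=0$ and the statement collapses to $TC(X;Y) = TC_L(X;Y) = \sum_i I(Y:X_i) - I(Y:X)$, consistent with \eqref{eq:tcxy}. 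No step beyond the conditional-independence observation presents any genuine difficulty.
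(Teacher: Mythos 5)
Your proposal is correct. The inequality is handled exactly as in the paper (subtracting the non-negative conditional total correlation $TC(X|Y)$, a KL divergence, from $TC(X)$), and the overall decomposition is the same; the only difference is how you obtain the crux identity $TC(Y) = \sum_{j=1}^m I(Y_j:X) - I(X:Y)$. The paper derives it by directly expanding $I(X:Y) = \mathbb E\left[\log \frac{p(y|x)}{p(y)}\right]$, inserting $\prod_j p(y_j)$, and using the factorization $p(y|x) = \prod_j p(y_j|x)$ to read off $-TC(Y) + \sum_j I(Y_j:X)$. You instead observe that the representation property is precisely the statement $TC(Y|X)=0$, and then apply the general identity of Eq.~\ref{eq:tcxy} with the roles of $X$ and $Y$ swapped, so $TC(Y) = TC(Y;X) = \sum_j I(Y_j:X) - I(Y:X)$. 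The two routes are equivalent (both exploit only the factorization of $p(y|x)$), but yours buys a cleaner statement of where the hypothesis enters — as a single conditional-independence fact — and avoids redoing the expectation computation, at the cost of relying on Eq.~\ref{eq:tcxy} being valid for arbitrary argument groups, which indeed it is since it is an identity not requiring the representation structure. The remaining add-and-subtract bookkeeping matches the paper's substitution of the identity into Eq.~\ref{eq:tcxy}, and your $m=1$ sanity check is consistent with Sec.~\ref{sec:single}.
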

\begin{proof} A proof is provided in Sec.~\ref{sec:lower}.\end{proof}

\begin{corollary}
\label{tclb}
$TC(X;Y)  \geq TC_{L} (X; Y)$
\end{corollary}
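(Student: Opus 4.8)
The plan is to read the result off directly from Theorem~\ref{basic}. That theorem gives the exact decomposition $TC(X;Y) = TC(Y) + TC_L(X;Y)$ whenever $Y$ is a representation of $X$, so the corollary reduces entirely to showing that the ``extra'' term $TC(Y)$ is non-negative.

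First I would invoke the defining identity \eqref{eq:tc} applied to the tuple $Y \equiv Y_1,\ldots,Y_m$, namely $TC(Y) = D_{KL}\!\left(p(y)\,\|\,\prod_{j=1}^m p(y_j)\right)$, which is exactly the observation already made in the text right after \eqref{eq:tc} that total correlation ``is non-negative since it can be written as a KL divergence.'' Hence $TC(Y) \ge 0$. Substituting into the equality of \eqref{eq:basic} then yields $TC(X;Y) = TC(Y) + TC_L(X;Y) \ge TC_L(X;Y)$, which is the claim.

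There is essentially no obstacle here: the only content is the non-negativity of a KL divergence, and the structural work (establishing the decomposition itself) has already been done in Theorem~\ref{basic}. If one wanted to be self-contained about $TC(Y)\ge 0$, the single fact needed is Gibbs' inequality / non-negativity of relative entropy, which requires no assumptions on the (discrete) latent domains $\mathcal Y_j$. So the proof is just the two lines above; the corollary is really a packaging of Theorem~\ref{basic} that isolates the lower bound $TC_L(X;Y)$ as the quantity one will later optimize.
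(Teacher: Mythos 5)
Your proof is correct and is exactly the paper's argument: the corollary follows from the decomposition $TC(X;Y) = TC(Y) + TC_L(X;Y)$ in Thm.~\ref{basic} together with the non-negativity of $TC(Y)$ as a KL divergence. Nothing further is needed.
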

This follows from Eq.~\ref{eq:basic} due to the non-negativity of total correlation. Note that this lower bound is zero if $Y$ contains no information about $X$, i.e., if $\forall x\in \mathcal X, p(y_j|x) = p(y_j)$.

\begin{theorem}
\emph{Hierarchical Lower Bound on $TC(X)$}
\label{hierarchy}
If $Y^{1:r}$ is a hierarchical representation of $X$ and we define $Y^0 \equiv X$,
\be
TC(X) \geq \sum_{k=1}^r TC_{L} (Y^{k-1}; Y^k).
\ee 
\end{theorem}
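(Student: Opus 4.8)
The plan is to apply Theorem~\ref{basic} once at each level of the hierarchy and then telescope. Since $Y^1$ is a representation of $X \equiv Y^0$, Theorem~\ref{basic} immediately gives $TC(Y^0) \geq TC(Y^1) + TC_L(Y^0; Y^1)$. The essential point is that the definition of a hierarchical representation says $Y^k$ is a representation of $Y^{k-1}$ for every $k$, so Theorem~\ref{basic} applies verbatim with $Y^{k-1}$ playing the role of the inputs and $Y^k$ playing the role of the representation, yielding
\[
TC(Y^{k-1}) \;\geq\; TC(Y^k) + TC_L(Y^{k-1}; Y^k), \qquad k = 1, \ldots, r.
\]

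Next I would chain these $r$ inequalities, for instance by induction on $k$: substitute the bound on $TC(Y^1)$ into the bound on $TC(Y^0)$, then the bound on $TC(Y^2)$ into the result, and so on. The intermediate $TC(Y^k)$ terms telescope, leaving
\[
TC(X) \;\geq\; TC(Y^r) + \sum_{k=1}^r TC_L(Y^{k-1}; Y^k).
\]
Finally, I would discard the term $TC(Y^r)$, which is nonnegative because the total correlation of any collection of variables is a KL divergence (as noted just after Eq.~\ref{eq:tc}), to obtain the claimed bound.

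There is no real analytic obstacle here: each step is a direct invocation of the already-established Theorem~\ref{basic}, and the closing step uses only nonnegativity of $TC$. The one point to state carefully is the substitution of roles in Theorem~\ref{basic} at each level --- that the hypothesis ``$Y^k$ is a representation of $Y^{k-1}$'' is exactly what licenses applying the two-layer decomposition with $(Y^{k-1}, Y^k)$ in place of $(X, Y)$ --- together with the precise bookkeeping of the induction; both are routine, so I expect the write-up to be short.
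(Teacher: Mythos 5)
Your proposal is correct and follows exactly the paper's argument: repeatedly invoke Thm.~\ref{basic} with $(Y^{k-1},Y^k)$ in place of $(X,Y)$, telescope the resulting inequalities, and drop the nonnegative $TC(Y^r)$ term. No gaps; this is the same proof.
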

\begin{proof}
This follows from writing down Thm.~\ref{basic}, $TC(X) \geq TC(Y^1) + TC_{L} (X;Y^1)$. Next we repeatedly invoke Thm.~\ref{basic} to replace $TC(Y^{k-1})$ with its lower bound in terms of $Y^k$. The final term, $TC(Y^r)$, is non-negative and can be discarded. Alternately, if $m_r$ is small enough it could be estimated directly or if $m_r=1$ this implies $TC(Y^r)=0$. 
\end{proof}
\begin{theorem}
\emph{Upper Bounds on $TC(X)$}
\label{tcub}

If $Y^{1:r}$ is a hierarchical representation of $X$ and we define $Y^0 \equiv X$, and additionally $m_r=1$ and all variables are discrete, then, 
\benn\label{eq:tcub}
TC(X) &\leq  \sum_{k=1}^r \left( TC_L(Y^{k-1};Y^k) + \sum_{i=1}^{m_{k-1}} H(Y^{k-1}_i|Y^k) \right).
\eenn
\end{theorem}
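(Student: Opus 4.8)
The plan is to derive a one-layer upper bound and then telescope it through the hierarchy, mirroring the proof of Theorem~\ref{hierarchy} but now \emph{retaining} the residual conditional-entropy terms that were discarded there. The starting point is the exact identity $TC(X) = TC(X;Y) + TC(X|Y)$, which is just a rearrangement of the definition in Eq.~\ref{eq:tcxy}.

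First I would handle a single representation $Y$ of $X$. Applying Theorem~\ref{basic} to the right-hand side rewrites $TC(X;Y) = TC(Y) + TC_L(X;Y)$, so $TC(X) = TC(Y) + TC_L(X;Y) + TC(X|Y)$. The last term is then bounded crudely: by definition $TC(X|Y) = \sum_{i=1}^n H(X_i|Y) - H(X|Y)$, and since all variables are discrete $H(X|Y)\ge 0$, hence $TC(X|Y) \le \sum_{i=1}^n H(X_i|Y)$. This gives the one-layer upper bound $TC(X) \le TC(Y) + TC_L(X;Y) + \sum_{i=1}^n H(X_i|Y)$.

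Next I would iterate this along the hierarchy. Because $Y^1$ is a representation of $Y^0 \equiv X$, the one-layer bound reads $TC(Y^0) \le TC(Y^1) + TC_L(Y^0;Y^1) + \sum_{i=1}^{m_0} H(Y^0_i|Y^1)$; because $Y^2$ is a representation of $Y^1$, the same bound applies to $TC(Y^1)$, and so on up to level $r$. Substituting each bound into the previous one yields $TC(X) \le TC(Y^r) + \sum_{k=1}^r \bigl( TC_L(Y^{k-1};Y^k) + \sum_{i=1}^{m_{k-1}} H(Y^{k-1}_i|Y^k) \bigr)$. Finally the hypothesis $m_r = 1$ forces $TC(Y^r)=0$ identically (the sum $\sum_i H(Y^r_i)$ has a single term, equal to $H(Y^r)$), so the leftover term drops and the claim follows.

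I do not expect a hard inequality to be the obstacle here; the subtle points are bookkeeping. The discreteness hypothesis is essential precisely at the steps $H(Y^{k-1}|Y^k)\ge 0$ — with continuous variables these are differential entropies that can be negative and the bound would break — so I would be careful to invoke it at each level. I would also note explicitly that applying Theorem~\ref{basic} at level $k$ is legitimate because "hierarchical representation" was defined so that $Y^k$ is a representation of $Y^{k-1}$, and that the telescoping uses nothing beyond non-negativity of total correlation and of conditional entropy.
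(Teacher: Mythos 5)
Your proposal is correct and follows essentially the same route as the paper's proof: rewrite $TC(X) = TC(X|Y^1) + TC(Y^1) + TC_L(X;Y^1)$ via Theorem~\ref{basic}, bound $TC(X|Y^1) \leq \sum_i H(X_i|Y^1)$ using non-negativity of the discrete conditional entropy, iterate up the hierarchy as in Theorem~\ref{hierarchy}, and drop $TC(Y^r)=0$ since $m_r=1$. Your explicit remarks about where discreteness is needed and why Theorem~\ref{basic} applies at each level are correct bookkeeping that the paper leaves implicit.
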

\begin{proof} A proof is provided in Sec.~\ref{sec:upper}.\end{proof}
The reason for stating this upper bound is to show it is equal to the lower bound plus the term $\sum_k \sum_{i=1}^{m_{k-1}} H(Y^{k-1}_i|Y^k)$. If each variable is perfectly predictable from the layer above it we have a guarantee that our bounds are tight and our representation provably contains all the information in $X$. 
Thm.~\ref{tcub} is stated for discrete variables for simplicity but a similar bound holds if $X_i$ are not discrete.

\paragraph{Bounds on $H(X)$}
We focus above on total correlation as a measure of information. One intuition for this choice is that uncorrelated subspaces are, in a sense, not truly high-dimensional and can be characterized separately. On the other hand, the entropy of $X$, $H(X)$, can naively be considered the appropriate measure of the ``information.''\footnote{The InfoMax principle~\cite{linsker} constructs representations to directly maximize the (instinctive but misleading quantity~\cite{icml2014}) mutual information, $I(X:Y)$. Because InfoMax ignores the multivariate structure of the input space, it cannot take advantage of our hierarchical decompositions. The efficiency of our method and the ability to progressively bound information rely on this decomposition.}
Estimating the multivariate mutual information is really the hard part of estimating $H(X)$. We can write $H(X) = \sum_{i=1}^n H(X_i) - TC(X)$. The marginal entropies, $H(X_i)$ are typically easy to estimate so that our bounds on $TC(X)$ in Thm.~\ref{hierarchy} and Thm.~\ref{tcub} directly translate into bounds on the entropy as well.  

The theorems above provide useful bounds on information in high-dimensional systems for three reasons. First, they show how to additively decompose information. Second, in Sec.~\ref{sec:optimal} we show that $TC_{L}(Y^{k-1};Y^k)$ can be efficiently optimized over, leading to progressively tighter bounds. Finally, $TC_{L}(Y^{k-1};Y^k)$ can be efficiently estimated even using small amounts of data, as described in Sec.~\ref{sec:data}. 

\section{Optimized Representations}\label{sec:optimal}

Thm.~\ref{hierarchy} suggests a way to build optimally informative hierarchical representation from the bottom up. Each layer can be optimized to maximally explain the correlations in the layer below. The contributions from each layer can be simply summed to provide a progressively tighter lower bound on the total correlation in the data itself. 
\be
\label{eq:basicopt}
\max_{\forall j, p(y_j^1|x)} TC_{L}(X;Y^1)
\ee
After performing this optimization, in principle one can continue to maximize $TC_{L}(Y^1;Y^2)$ and so forth up the hierarchy. 
As a bonus, representations with different numbers of layers and different numbers of variables in each layer can be easily and objectively compared according to the tightness of the lower bound on $TC(X)$ that they provide using Thm.~\ref{hierarchy}.

While solving this optimization and obtaining accompanying bounds on the information in $X$ would be convenient, it does not appear practical because the optimization is over all possible probabilistic functions of $X$. We now demonstrate the surprising fact that the solution to this optimization implies a solution for $p(y_j|x)$ with a special form in terms of a set of self-consistent equations that can be solved iteratively.

\subsection{A Single Latent Factor}\label{sec:single}

First, we consider the simple representation for which $Y^1 \equiv Y^1_1$ consists of a single random variable taking values in some discrete space. In this special case, $TC(X;Y^1) = TC_L(X;Y^1) = \sum_i I(Y_1^1:X_i) - I(Y_1^1:X)$. Optimizing Eq.~\ref{eq:basicopt} in this case leads to 
\be
\label{eq:m1}
\max_{p(y_1|x)} \sum_{i=1}^n I(Y_1^1:X_i) - I(Y_1^1:X).
\ee

Instead of looking for the optimum of this expression, we consider the optimum of a slightly more general expression whose solution we will be able to re-use later. Below, we omit the superscripts and subscripts on $Y$ for readability. 
Define the ``$\alpha$-Ancestral Information'' that $Y$ contains about $X$ as follows, $AI_\alpha (X;Y) \equiv \sum_{i=1}^n \alpha_i I(Y:X_i) - I(Y:X)$, where $\alpha_i \in [0,1]$. The name is motivated by results that show that if $AI_{\alpha}(X;Y)$ is positive for some $\alpha$, it implies the existence of common ancestors for some ($\alpha$-dependent) set of $X_i$'s in any DAG that describes $X$~\cite{steudelay}. We do not make use of those results, but the overlap in expressions is suggestive. 
We consider optimizing the ancestral information where $\alpha_i \in [0,1]$ keeping in mind that the special case of $\forall i,\alpha_i=1$ reproduces Eq.~\ref{eq:m1}. 
\be
\label{eq:AI} 
\max_{p(y|x)} \sum_{i=1}^n \alpha_i I(Y:X_i) - I(Y:X)
\ee
We use Lagrangian optimization (detailed derivation is in Sec.~\ref{sec:derive}) to find the solution.
\begin{align}
p(y|x) &= \frac1{Z(x)} p(y) \prod_{i=1}^n \left(\frac{p(y|x_i)}{p(y)}\right)^{\alpha_{i}} \label{eq:label}
\end{align}
Normalization is guaranteed by $Z(x)$. 
While Eq.~\ref{eq:label} appears as a formal solution to the problem, we must remember that it is defined in terms of quantities that themselves depend on $p(y|x)$.
\be\label{eq:marg}
p(y|x_i) &= \sum_{\bar x \in \mathcal X} p(y|\bar x) p(\bar x) \delta_{\bar x_i,x_i}/p(x_i) \\
 p(y) &= \sum_{\bar x \in \mathcal X} p(y|\bar x) p(\bar x)  
\ee
Eq.~\ref{eq:marg} simply states that the marginals are consistent with the labels $p(y|x)$ for a given distribution, $p(x)$.

This solution has a remarkable property. Although our optimization problem was over all possible probabilistic functions, $p(y|x)$, Eq.~\ref{eq:label} says that this function can be written in terms of a linear (in $n$, the number of variables) number of parameters which are just marginals involving the hidden variable $Y$ and each $X_i$. We show how to exploit this fact to solve optimization problems in practice using limited data in Sec.~\ref{sec:data}. 

\subsection{Iterative Solution}\label{sec:update}
The basic idea is to iterate between the self-consistent equations to converge on a fixed-point solution. 
Imagine that we start with a particular representation at time $t$, $p^t(y|x)$ (ignoring the difficulty of this for now). Then, we estimate the marginals, $p^{t}(y|x_i), p^{t}(y)$ using Eq.~\ref{eq:marg}. Next, we update $p^{t+1}(y|x)$ according to the rule implied by Eq.~\ref{eq:label},
\be\label{eq:update}
p^{t+1}(y|x) &= \frac1{Z^{t+1}(x)} p^{t}(y) \prod_{i=1}^n \left(\frac{p^{t}(y|x_i)}{p^{t}(y)}\right)^{\alpha_{i}}.
\ee
Note that $Z^{t+1}(x)$ is a partition function that can be easily calculated for each $x$ (by summing over the latent factor, $Y$, which is typically taken to be binary). 
$$ Z^{t+1}(x) = \sum_{y \in \mathcal Y} p^{t}(y) \prod_{i=1}^n \left(\frac{p^{t}(y|x_i)}{p^{t}(y)}\right)^{\alpha_{i}}$$
\begin{theorem}
\label{convergence}
Assuming $\alpha_1,\ldots,\alpha_n \in [0,1]$, iterating over the update equations given by Eq.~\ref{eq:update} and Eq.~\ref{eq:marg} never decreases the value of the objective in Eq.~\ref{eq:AI} and is guaranteed to converge to a stationary fixed point.
\end{theorem}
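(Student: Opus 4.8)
The plan is to read one pass of the two update equations as one step of a minorize--maximize (MM) scheme, in the same spirit as the convergence proofs for EM and for the Blahut--Arimoto and information-bottleneck algorithms, and then to add the single non-routine ingredient needed to control the term of Eq.~\ref{eq:AI} whose coefficient has the ``wrong'' sign. First I would freeze the marginals $p^{t}(y|x_i),p^{t}(y)$ produced by Eq.~\ref{eq:marg} and introduce, for a trial encoder $r(y|x)$, the auxiliary functional
\[
\mathcal{S}_t[r] \equiv \sum_{x,y} p(x)\,r(y|x)\Bigl( \sum_{i}\alpha_i \log p^{t}(y|x_i) + (1-\textstyle\sum_i\alpha_i)\log p^{t}(y) - \log r(y|x)\Bigr).
\]
Since the parenthesis equals $\log p^{t+1}(y|x)+\log Z^{t+1}(x)$ with $p^{t+1}$ the encoder of Eq.~\ref{eq:update}, a one-line rearrangement gives $\mathcal{S}_t[r]=\sum_x p(x)\log Z^{t+1}(x)-\sum_x p(x)\,D_{KL}\!\bigl(r(\cdot|x)\,\|\,p^{t+1}(\cdot|x)\bigr)$. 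Hence $\mathcal{S}_t$ is maximized over all encoders exactly at $r=p^{t+1}$ (this is just the statement that Eq.~\ref{eq:update} solves the inner Gibbs problem, which is how Eq.~\ref{eq:label} was derived), and in particular $\mathcal{S}_t[p^{t+1}]-\mathcal{S}_t[p^{t}]=\sum_x p(x)\,D_{KL}\!\bigl(p^{t}(\cdot|x)\,\|\,p^{t+1}(\cdot|x)\bigr)\ge 0$.

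The second step is to tie $\mathcal{S}_t$ back to the true objective. Expanding $AI_\alpha$ and cancelling the $\log r(y|x)$ and $\log r(y)$ bookkeeping terms yields the exact identity
\[
AI_\alpha[r]-\mathcal{S}_t[r]=\sum_i \alpha_i \sum_{x_i} p(x_i)\,D_{KL}\!\bigl(r(\cdot|x_i)\,\|\,p^{t}(\cdot|x_i)\bigr)+\bigl(1-\textstyle\sum_i\alpha_i\bigr)\,D_{KL}\!\bigl(r(\cdot)\,\|\,p^{t}(\cdot)\bigr),
\]
where $AI_\alpha[r]$ denotes Eq.~\ref{eq:AI} evaluated with $p(y|x)=r(y|x)$; this vanishes at $r=p^{t}$, so $\mathcal{S}_t[p^{t}]=AI_\alpha[p^{t}]$. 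Then $AI_\alpha[p^{t+1}]-AI_\alpha[p^{t}]=\bigl(\mathcal{S}_t[p^{t+1}]-\mathcal{S}_t[p^{t}]\bigr)+\bigl(AI_\alpha[p^{t+1}]-\mathcal{S}_t[p^{t+1}]\bigr)$, the first bracket is $\ge 0$ by the previous step, and the whole monotonicity claim reduces to showing the second bracket, i.e.\ $\sum_i\alpha_i\sum_{x_i}p(x_i)D_{KL}(p^{t+1}(\cdot|x_i)\|p^{t}(\cdot|x_i))+(1-\sum_i\alpha_i)D_{KL}(p^{t+1}(\cdot)\|p^{t}(\cdot))$, is non-negative.

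I expect this to be the only genuinely non-routine point. When $\sum_i\alpha_i\le 1$ the bracket is a sum of non-negative terms and there is nothing to do, but when $\sum_i\alpha_i>1$ its last term is negative, so the sign is not obvious. The resolution is the data-processing inequality (see, e.g.,~\cite{cover}): KL divergence is non-increasing under marginalization, so for each $i$, $D_{KL}(p^{t+1}(\cdot)\|p^{t}(\cdot))\le\sum_{x_i}p(x_i)D_{KL}(p^{t+1}(\cdot|x_i)\|p^{t}(\cdot|x_i))$; using this on the negative term together with $\alpha_i\ge 0$ leaves a non-negative remainder, equal to $D_{KL}(p^{t+1}(\cdot)\|p^{t}(\cdot))$. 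This gives $AI_\alpha[p^{t+1}]\ge AI_\alpha[p^{t}]$, and in fact only $\alpha_i\ge 0$, not $\alpha_i\le 1$, is used for monotonicity.

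Finally, for convergence I would observe that $AI_\alpha(X;Y)\le\sum_i\alpha_i I(Y:X_i)\le(\sum_i\alpha_i)\log|\mathcal{Y}|$ is bounded above, so the monotone sequence $AI_\alpha[p^{t}]$ converges and its increments tend to $0$; by the identities above this forces $\sum_x p(x)\,D_{KL}(p^{t}(\cdot|x)\|p^{t+1}(\cdot|x))\to 0$. The encoders live in a product of probability simplices (compact), and the map defined by Eq.~\ref{eq:update} is continuous wherever the relevant marginals stay positive, so the sequence has accumulation points, and continuity together with the vanishing of that KL term shows every accumulation point is a fixed point of Eq.~\ref{eq:update}; since the fixed points of Eq.~\ref{eq:update} coincide with the solutions of the stationarity condition Eq.~\ref{eq:label}, they are stationary points of Eq.~\ref{eq:AI}. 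The one remaining delicate point is upgrading ``every accumulation point is a stationary fixed point'' to ``the iterates converge'': that needs either an extra assumption (e.g.\ isolated fixed points), a Zangwill-type global-convergence argument, or simply stating the conclusion at the level of accumulation points, as is standard for EM-style algorithms.
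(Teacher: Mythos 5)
Your proof is correct, and it is essentially as rigorous as (in places more rigorous than) the paper's own argument, but it follows a genuinely different route. The paper, explicitly following the information-bottleneck convergence proof, introduces a functional $\mathcal F[p(x_i|y),p(y),p(y|x)]$ in which the marginals are treated as independent arguments, shows that each of the three update equations is the exact block-maximizer of $\mathcal F$ (using separate concavity in each argument), and concludes monotone convergence from the upper bound $\mathcal F \le TC_L(X;Y) \le TC(X)$, which is where the hypothesis $\alpha_i \le 1$ enters. Your surrogate $\mathcal S_t[r]$ is precisely that same $\mathcal F$ with the marginal arguments frozen at iterate $t$, so the first half of your argument (the encoder update maximizes $\mathcal S_t$, with gap $\sum_x p(x)D_{KL}(p^t(\cdot|x)\|p^{t+1}(\cdot|x))$) is equivalent to one of the paper's block steps; where you diverge is the second half: instead of arguing that re-computing the marginals is itself a block argmax, you write the exact gap $AI_\alpha[r]-\mathcal S_t[r]$ as $\sum_i\alpha_i\sum_{x_i}p(x_i)D_{KL}\bigl(r(\cdot|x_i)\|p^t(\cdot|x_i)\bigr)+\bigl(1-\sum_i\alpha_i\bigr)D_{KL}\bigl(r(\cdot)\|p^t(\cdot)\bigr)$ (which I have verified) and handle the possibly negative coefficient via joint convexity of KL under mixing. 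This buys you something the paper does not have: monotonicity under only $\alpha_i\ge 0$, boundedness via $(\sum_i\alpha_i)\log|\mathcal Y|$ rather than via $TC(X)$, and a quantitative per-iteration improvement bound in KL terms, which in turn supports your cleaner accumulation-point argument that limits are fixed points of the update and hence stationary points of Eq.~\ref{eq:AI}. The paper's version is shorter and treats the three updates symmetrically, but it is no stronger; and your closing caveat—that convergence of the iterates themselves (as opposed to objective values and accumulation points) needs an extra ingredient such as isolated fixed points or a Zangwill-type argument—is an honest acknowledgment of a gap that the paper's proof glosses over as well. The only minor technical point to keep in mind is the positivity of the marginals needed for continuity of the update map, which you already flag.
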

Proof is provided in Sec.~\ref{sec:Z}.

At this point, notice a surprising fact about this partition function. Rearranging Eq.~\ref{eq:label} and taking the $\log$ and expectation value,
\be\label{eq:Z}
\mathbb E \left[ \log Z(x) \right] &= \mathbb E \left[ \log \frac{p(y)}{p(y|x)} \prod_{i=1}^n \left(\frac{p(y|x_i)}{p(y)}\right)^{\alpha_{i}} \right]  \\ &= \sum_{i=1}^n \alpha_i I(Y:X_i) - I(Y:X) 
\ee
The expected log partition function (sometimes called the \emph{free energy}) is just the value of the objective we are optimizing. We can estimate it at each time step and it will converge as we approach the fixed point. 

\subsection{Multiple Latent Factors}

Directly maximizing $TC_{L}(X;Y)$, which in turn bounds $TC(X)$, with $m>1$ is intractable for large $m$. Instead we construct a lower bound that shares the form of Eq.~\ref{eq:AI} and therefore is tractable.  
\be\label{eq:tclb2}
\lefteqn{TC_{L}(X;Y) \equiv \sum_{i=1}^n I(Y:X_i) - \sum_{j=1}^m I(Y_j:X)} \\
&=& \sum_{i=1}^n \sum_{j=1}^m I(Y_j : X_i |Y_{1:j-1}) - \sum_{j=1}^m I(Y_j:X) \\
& \geq&  \sum_{j=1}^m \left(\sum_{i=1}^n  \alpha_{i,j} I(Y_j : X_i) -  I(Y_j:X) \right)
\ee
In the second line, we used the chain rule for mutual information~\cite{cover}. Note that in principle the chain rule can be applied for any ordering of the $Y_j$'s.  
In the final line, we rearranged summations to highlight the decomposition as a sum of terms for each hidden unit, $j$. Then, we simply \emph{define} $\alpha_{i,j}$ so that, 
\be\label{eq:alpha}
 I(Y_j : X_i |Y_{1:j-1}) \geq  \alpha_{i,j} I(Y_j : X_i).
\ee
An intuitive way to interpret $ I(Y_j : X_i |Y_{1:j-1})/I(Y_j : X_i)  $ is as the fraction of $Y_j$'s information about $X_i$ that is unique (i.e. not already present in $Y_{1:j-1}$). Cor.~\ref{tclb} implies that $\alpha_{i,j} \leq 1$ and it is also clearly non-negative. 

Now, instead of maximizing $TC_{L}(X;Y)$ over all hidden units, $Y_j$, we maximize this lower bound over both $p(y_j|x)$ and $\alpha$, subject to some constraint, $c_{i,j} (\alpha_{i,j}) = 0$ that guarantees that $\alpha$ obeys Eq.~\ref{eq:alpha}.  
\be\label{eq:fullopt}
&\max_{ \substack{\alpha_{i,j},p(y_j|x) \\ c_{i,j}(\alpha_{i,j})=0} }
~~\sum_{j=1}^m \left(\sum_{i=1}^n  \alpha_{i,j} I(Y_j : X_i) -  I(Y_j:X) \right)
\ee
We solve this optimization problem iteratively, re-using our previous results.
First, we fix $\alpha$ so that this optimization is equivalent to solving $j$ problems of the form in Eq.~\ref{eq:AI} in parallel by adding indices to our previous solution, 
\be
p(y_j|x) &= \frac1{Z_j(x)} p(y_j) \prod_{i=1}^n \left(\frac{p(y_j|x_i)}{p(y_j)}\right)^{\alpha_{i,j}}. \label{eq:label_m}
\ee 
The results in Sec.~\ref{sec:update} define an incremental update scheme that is guaranteed to increase the value of the objective.
Next, we fix $p(y_j |x)$ and update $\alpha$ so that it obeys Eq.~\ref{eq:alpha}. 
Updating $p(y_j|x)$ never decreases the objective and as long as $\alpha_{i,j} \in [0,1]$, the total value of the objective is upper bounded.
Unfortunately, the $\alpha$-update scheme is not guaranteed to increase the objective. Therefore, we stop iterating if changes in $\alpha$ have not increased the objective over a time window including the past ten iterations. In practice we find that convergence is obtained quickly with few iterations as shown in Sec.~\ref{sec:experiments}. Specific choices for updating $\alpha$ are discussed next.

\paragraph{Optimizing the Structure}
Looking at Eq.~\ref{eq:label_m}, we see that $\alpha_{i,j}$ really defines the input variables, $X_i$, that $Y_j$ depends on. If $\alpha_{i,j} = 0$, then $Y_j$ is independent of $X_i$ conditioned on the remaining $X$'s. Therefore, we say that $\alpha$ defines the structure of the representation. 
For $\alpha$ to satisfy the inequality in the last line of Eq.~\ref{eq:tclb2}, we can use the fact that $\forall j, I(Y_j:X_i) \leq I(Y:X_i)$. Therefore, we can lower bound $I(Y:X_i)$ using any convex combination of  $I(Y_j:X_i)$ by demanding that $\forall i, \sum_j \alpha_{i,j} = 1$. 
A particular choice is as follows:
\be\label{eq:tree}
\alpha_{i,j} = \mathbb I [ j = \arg \max_{\bar j} I(X_i : Y_{\bar j})].\vspace{-0mm}
\ee
This leads to a tree structure in which each $X_i$ is connected to only a single (most informative) hidden unit in the next layer. This strategy reproduces the latent tree learning method previously introduced~\cite{nips2014}.  

Based on Eq.~\ref{eq:alpha}, we propose a heuristic method to estimate $\alpha$ that does not restrict solutions to trees. 
For each data sample $l=1,\ldots,N$ and variable $X_i$, we check if $X_i$ correctly predicts $Y_j$ (by counting $d_{i,j}^l \equiv  \mathbb I[ \arg \max_{y_j} \log p(Y_j=y_j|x^{(l)}) = \arg \max_{y_j} \log p(Y_j=y_j|x^{(l)}_i)/p(Y_j=y_j)]$.
For each $i$, we sort the $j$'s according to which ones have the most correct predictions (summing over $l$).  
Then we set $\alpha_{i,j}$ as the percentage of samples for which $d_{i,j}^l =1$ while $d_{i,1}^l=\cdots = d_{i,j-1}^l =0$ . How well this approximates the fraction of unique information in Eq.~\ref{eq:alpha} has not been determined, but empirically it gives good results. Choosing the best way for efficiently lower-bounding the fraction of unique information is a question for further research. 


\section{Complexity and Implementation}\label{sec:data}

Multivariate measures of information have been used to capture diverse concepts such as redundancy, synergy, ancestral information, common information, and complexity. Interest in these quantities remains somewhat academic since they typically cannot be estimated from data except for toy problems. Consider a simple problem in which $X_1,\ldots,X_n$ represent $n$ binary random variables. The size of the state space for $X$ is $2^n$. The information-theoretic quantities we are interested in are functionals of the full probability distribution. Even for relatively small problems with a few hundred variables, the number of samples required to estimate the full distribution is impossibly large. 

Imagine that we are given $N$ iid samples, $x^{(1)},\ldots,x^{(l)},\ldots,x^{(N)}$, from the unknown distribution $p(x)$. 
A naive estimate of the probability distribution is given by $\hat p(x) \equiv \frac1N \sum_{l=1}^N \mathbb I[x=x^{(l)}]$. Since $N$ is typically much smaller than the size of the state space, $N \ll 2^n$, this would seem to be a terrible estimate. On the other hand, if we are just estimating a marginal like $p(x_i)$, then a simple Chernoff bound guarantees that our estimation error decreases exponentially with $N$. 

Our optimization seemed intractable because it is defined over $p(y_j|x)$. If we approximate the data distribution with $\hat p(x)$, then instead of specifying $p(y_j | x)$ for all possible values of $x$, we can just specify $p(y_j | x^{(l)})$ for the $l=1,\ldots, N$ samples that have been seen. The next step in optimizing our objective (Sec.~\ref{sec:update}) is to calculate the marginals $p(y_j|x_i)$. To calculate these marginals with fixed error only requires a constant number of samples (constant w.r.t. the number of variables). Finally, updating the labels, $p(y_j | x^{(l)})$, amounts to calculating a log-linear function of the marginals (Eq.~\ref{eq:label_m}). 

Similarly, $\log Z_j(x^{(l)})$, is just a random variable that can be calculated easily for each sample and the sample mean provides an estimate of the true mean. But we saw in Eq.~\ref{eq:Z} that the average of this quantity is just (the $j$-th component of) the objective we are optimizing. 
This allows us to estimate successively tighter bounds for $TC(X;Y)$ and $TC(X)$ for very high-dimensional data. In particular, we have,
\be\label{eq:est}
TC(X) \geq TC_L(X;Y) \approx \sum_{j=1}^m \frac1N \sum_{l=1}^N \log Z_j (x^{(l)}).
\ee

\paragraph{Algorithm and computational complexity} 
The pseudo-code of the algorithm is laid out in detail in \cite{nips2014} with the procedure to update $\alpha$ altered according to the previous discussion. Consider a dataset with $n$ variables and $N$ samples for which we want to learn a representation with $m$ latent factors. At each iteration, we have to update the marginals $p(y_j | x_i), p(y_j)$, the structure $\alpha_{i,j}$, and re-calculate the labels for each sample, $p(y_j | x^{(l)})$. These steps each require $O(m\cdot n \cdot N)$ calculations. 
Note that instead of using $N$ samples, we could use a mini-batch of fixed size at each update to obtain fixed error estimates of the marginals. 
We can stop iterating after convergence or some fixed number of iterations. Typically a very small number of iterations suffices, see Sec.~\ref{sec:experiments}.

\paragraph{Hierarchical Representation} 
To build the next layer of the representation, we need samples from $p_{Y^1}(y^1)$. In practice, for each sample, $x^{(l)}$, we construct the maximum likelihood label for each $y^1_j$ from $p(y_j^1 | x^{(l)})$, the solution to Eq.~\ref{eq:fullopt}. Empirically, most learned representations are nearly deterministic so this approximation is quite good.

\paragraph{Quantifying contributions of hidden factors} The benefit of adding layers of representations is clearly quantified by Thm.~\ref{hierarchy}. If the contribution at layer $k$ is smaller than some threshold (indicating that the total correlation among variables at layer $k$ is small) we can set a stopping criteria. Intuitively, this means that we stop learning once we have a set of nearly independent factors that explain correlations in the data. Thus, a criteria similar to independent component analysis (ICA)~\cite{ica} appears as a byproduct of correlation explanation. 
Similarly, the contribution to the objective for each latent factor, $j$, is quantified by $\sum_{i} \alpha_{i,j} I(Y_j:X_i) - I(Y_j:X) = \mathbb E[\log Z_j(x)]$. Adding more latent factors beyond a certain point leads to diminishing returns. This measure also allows us to do component analysis, ranking the most informative signals in the data.

\paragraph{Continuous-valued data} The update equations in Eq.~\ref{eq:update} depend on ratios of the form $p(y_j|x_i) / p(y_j)$. For discrete data, this can be estimated directly. For continuous data, we can use Bayes' rule to write this as $p(x_i | y_j) / p(x_i)$. Next, we parametrize each marginal so that $X_i  | {Y_j=k}  \sim \mathcal N(\mu_{i,j,k} , \sigma_{i,j,k})$. Now to estimate these ratios, we first estimate the parameters (this is easy to do from samples) and then calculate the ratio using the parametric formula for the distributions. Alternately, we could estimate these density ratios non-parametrically~\cite{icml2014} or using other prior knowledge.

\section{Experiments}\label{sec:experiments}

We now present experiments constructing hierarchical representations from data by optimizing Eq.~\ref{eq:fullopt}. 
The only change necessary to implement this optimization using available code and pseudo-code~\cite{nips2014,corex_code} is to alter the $\alpha$-update rule according to the discussion in the previous section.
We consider experiments on synthetic and real-world data. We take the domain of latent factors to be binary and we must also specify the number of hidden units in each layer. 

\paragraph{Synthetic data} 
The special case where $\alpha$ is set according to Eq.~\ref{eq:tree} creates tree-like representations recovering the method of previous work~\cite{nips2014}. That paper demonstrated the ability to perfectly reconstruct synthetic latent trees in time $O(n)$ while state-of-the-art techniques are at least $O(n^2)$~\cite{tree_survey}. It was also shown that for high-dimensional, highly correlated data, CorEx outperformed all competitors on a clustering task including ICA, NMF, RBMs, k-means, and spectral clustering. Here we focus on synthetic tests that gauge our ability to measure the information in high-dimensional data and to show that we can do this for data generated according to non-tree-like models. 

To start with a simple example,
imagine that we have four independent Bernoulli variables, $Z_0, Z_1, Z_2, Z_3$ taking values $0,1$ with probability one half. 
Now for $j=0,1,2,3$ we define random variables
$X_i \sim \mathcal N(Z_j, 0.1)$, for $i=100j+1,\ldots, 100j+100$.
We draw 100 samples from this distribution and then shuffle the columns. The raw data is shown in Fig.~\ref{fig:single}(a), along with the data columns and rows sorted according to learned structure, $\alpha$, and the learned factors, $Y_j$, which perfectly recovers structure and $Z_j$'s for this simple case (Fig.~\ref{fig:single}(b)). More interestingly, we see in Fig.~\ref{fig:single}(c) that only three iterations are required for our lower bound (Eq.~\ref{eq:est}) to come within a percent of the true value of $TC(X)$. This provides a useful signal for learning: increasing the size of the representation by increasing the number of hidden factors or the size of the state space of $Y$ cannot increase the lower bound because it is already saturated. 

\begin{figure}[htbp] 
   \centering
   (a)
   \includegraphics[width=0.8\columnwidth]{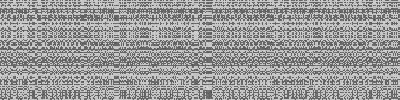} \\ \vspace{2mm}
   
   (b)      
   \includegraphics[width=0.8\columnwidth]{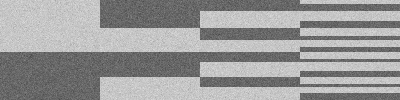} \\ \vspace{2mm}
   
   (c)
         \includegraphics[width=0.8\columnwidth]{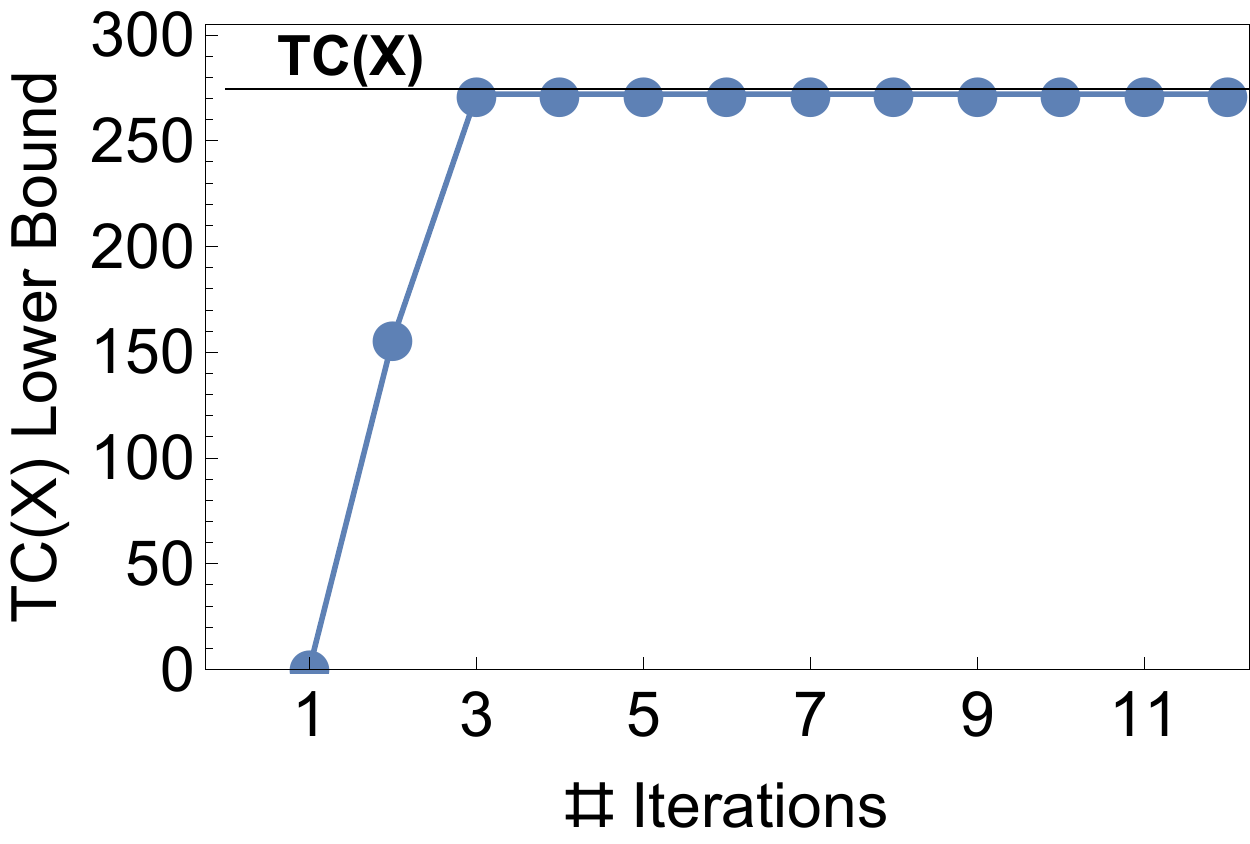}  
   \caption{(a) Randomly generated data with permuted variables. (b) Data with columns and rows sorted according to $\alpha$ and $Y_j$ values. (c) Starting with a random representation, we show the lower bound on total correlation at each iteration. It comes within a percent of the true value after only three iterations. }
   \label{fig:single}
\end{figure}

For the next example, we repeat the same setup except $Z_3 = Z_0 + Z_1$. If we learn a representation with three binary latent factors, then variables in the group $X_{301},\ldots, X_{400}$ should belong in overlapping clusters. 
Again we take 100 samples from this distribution.
For this example, there is no analytic form to estimate $TC(X)$ but we see in Fig.~\ref{fig:multiple}(a) that we quickly converge on a lower bound (Sec.~\ref{sec:restart} shows similar convergence for real-world data).  Looking at Eq.~\ref{eq:label_m}, we see that $Y_j$ is a function of $X_i$ if and only if $\alpha_{i,j} >0$. Fig.~\ref{fig:multiple}(b) shows that $Y_1$ alone is a function of the first 100 variables, etc., and that $Y_1$ and $Y_2$ both depend on the last group of variables, while $Y_3$ does not. In other words, the overlapping structure is correctly learned and we still get fast convergence in this case. 
When we increase the size of the synthetic problems, we get the same results and empirically observe the expected linear scaling in computational complexity.\footnote{With an unoptimized implementation, it takes about 12 minutes to run this experiment with 20,000 variables on a 2012 Macbook Pro.}

\begin{figure}[htbp] 
   \centering
%
%
(a)
         \includegraphics[width=0.8\columnwidth]{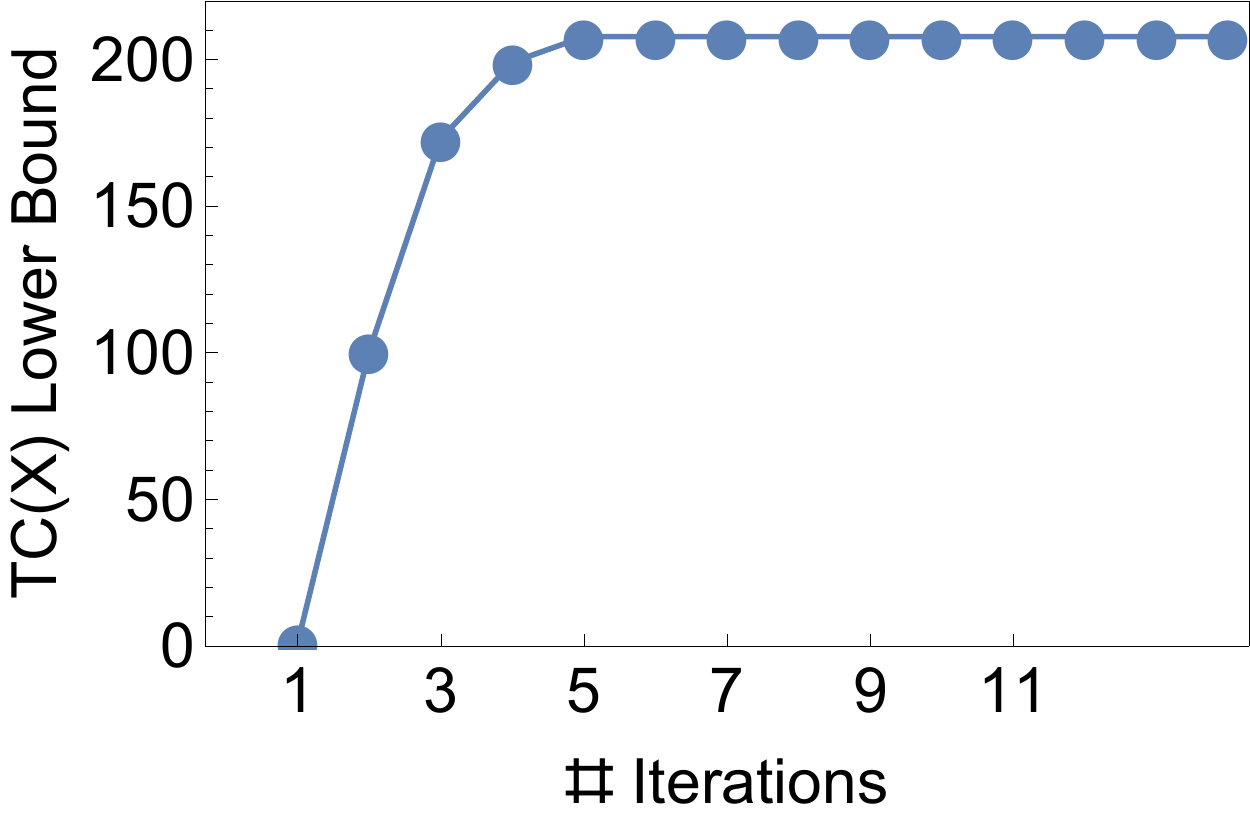} \\ \vspace{2mm}

 (b)
  \includegraphics[width=0.93\columnwidth]{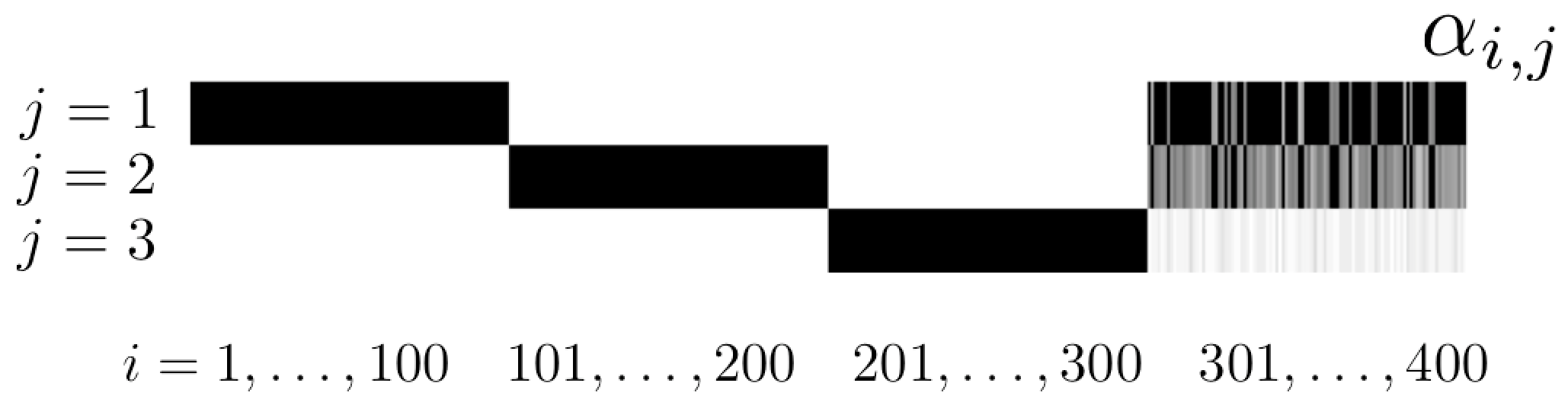} 
   \caption{ (a) Convergence rates for the overlapping clusters example. (b) Adjacency matrix representing $\alpha_{i,j}$. CorEx correctly clusters variables including overlapping clusters.}
   \label{fig:multiple}
   \vspace{-3mm}
\end{figure}

\paragraph{Finance data} For a real-world example, we consider financial time series. We took the monthly returns for companies on the S\&P 500 from 1998-2013\footnote{Data is freely available at \url{www.quantquote.com}.}. We included only the 388 companies which were on the S\&P 500 for the entire period. We treated each month's returns as an iid sample (a naive approximation~\cite{econometric}) from this 388 dimensional space. We use a representation with $m_1 =20, m_2=3, m_3=1$ and $Y_j$ were discrete trinary variables. 

Fig.~\ref{fig:spgraph} shows the overall structure of the learned hierarchical model. Edge thickness is determined by $\alpha_{i,j} I(X_i:Y_j)$. We thresholded edges with weight less than 0.16 for legibility. The size of each node is proportional to the total correlation that a latent factor explains about its children, as estimated using $\mathbb E(\log Z_j(x))$. Stock tickers are color coded according to the Global Industry Classification Standard (GICS) sector. Clearly, the discovered structure captures several significant sector-wide relationships. A larger version is shown in Fig.~\ref{fig:big}. For comparison, in Fig.~\ref{fig:nnet} we construct a similar graph using restricted Boltzmann machines. No useful structure is apparent. 

\begin{figure*}[tbp] 
   \centering
   \includegraphics[width=7in]{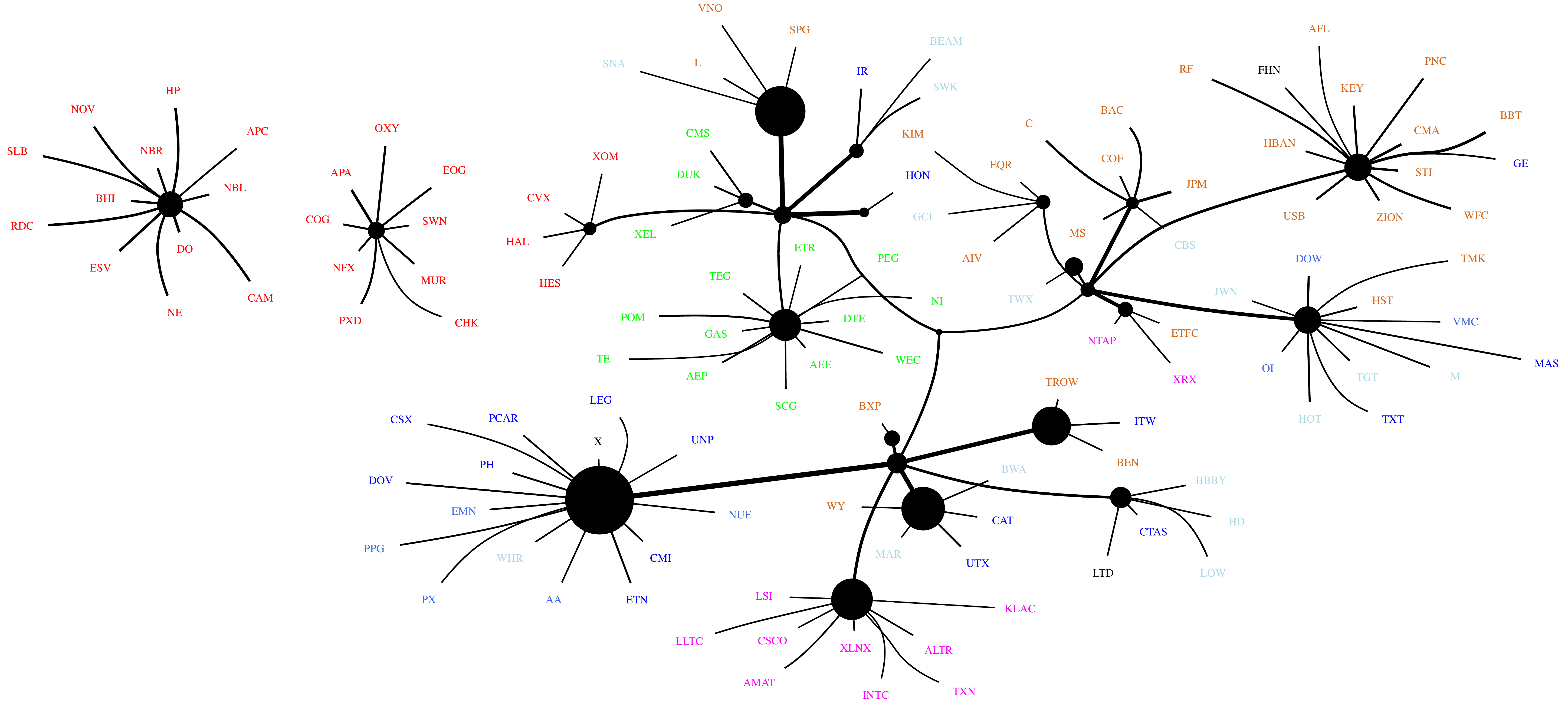} \\
   \includegraphics[width=6.5in]{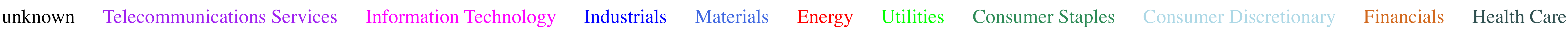} \vspace{-3mm}
   \caption{A thresholded graph showing the overall structure of the representation learned from monthly returns of S\&P 500 companies. Stock tickers are colored (online) according to their GICS sector. Edge thickness is proportional to mutual information and node size represents multivariate mutual information among children.} \vspace{-3mm}
   \label{fig:spgraph}
\end{figure*}

 We interpret $\widehat {TC_L}(X=x^{(l)};Y) \equiv \sum_j \log Z_j(x^{(l)})$ as the point-wise total correlation because its mean over all samples is our estimate of $TC_L(X;Y)$ (Eq.~\ref{eq:est}). We interpret deviation from the mean as a kind of surprise.  
In Fig.~\ref{fig:finance}, we compare the time series of the S\&P 500 to this point-wise total correlation. 
This measure of anomalous behavior captures the market crash in 2008 as the most unusual event of the decade. 

\begin{figure}[h]
   \centering
   \includegraphics[width=0.8\columnwidth]{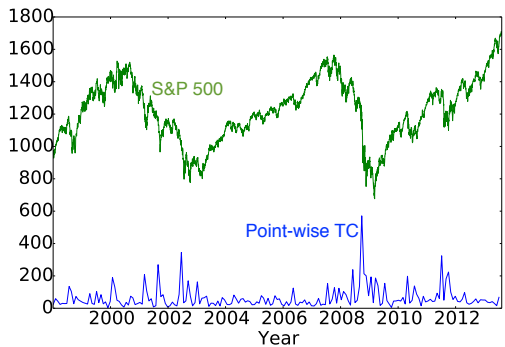} 
\caption{The S\&P 500 over time is compared to the point-wise estimate of total correlation described in the text.}\label{fig:finance}
\vspace{-4mm}
\end{figure}

CorEx naturally produces sparse graphs because a connection with a new latent factor is formed only if it contains unique information.
While the thresholded graph in Fig.~\ref{fig:spgraph} is tree-like, the full hierarchical structure is not, as shown in Fig.~\ref{fig:big}. The stock with the largest overlap in two groups was TGT, or Target, which was strongly connected to a group containing department stores like Macy's and Nordstrom's and was also strongly connected to a group containing home improvement retailers Lowe's, Home Depot,  and Bed, Bath, and Beyond. 
The next two stocks with large overlaps in two groups were Conoco-Phillips and Marathon Oil Corp.\ which were both connected to a group containing oil companies and another group containing property-related businesses. 

\section{Conclusions}\label{sec:conclusion}

We have demonstrated a method for constructing hierarchical representations that are maximally informative about the data. Each latent factor and layer contributes to tighter bounds on the information in the data in a quantifiable way.  
The optimization we presented to construct these representations has linear computational complexity and constant sample complexity which makes it attractive for real-world, high-dimensional data.
Previous results on the special case of tree-like representations outperformed state-of-the-art methods on synthetic data and demonstrated promising results for unsupervised learning on diverse data from human behavior, biology, and language~\cite{nips2014}. 
By introducing this theoretical foundation for hierarchical decomposition of information, we were able to extend previous results to enable discovery of overlapping structure in data and to provide bounds on the information contained in data. 
 
 Specifying the number and cardinality of latent factors to use in a representation is an inconvenience shared with other deep learning approaches. Unlike other approaches, the bounds in Sec.~\ref{sec:bounds} quantify the trade-off between representation size and tightness of bounds on information in the data. Methods to automatically size representations to optimize this trade-off will be explored in future work. 
 Other intriguing directions include using the bounds presented to characterize RBMs and auto-encoders~\cite{bengioreview}, and exploring connections to the information bottleneck~\cite{slonimmvib,slonimthesis}, multivariate information measures~\cite{williamsbeer,griffith,steudelay}, EM~\cite{dempster,EM_max}, and ``recognition models''~\cite{recognition}.
 
 The combination of a domain-agnostic theoretical foundation with rigorous, information-theoretic guarantees suggests compelling applications in domains with complex, heterogeneous, and highly correlated data such as gene expression and neuroscience~\cite{isbi_blood}. Preliminary experiments have produced intriguing results in these domains and will appear in future work.

\subsubsection*{Acknowledgments}
This research was supported in part by AFOSR grant FA9550-12-1-0417 and DARPA grant W911NF-12-1-0034.

\bibliographystyle{unsrt}
\bibliography{gversteeg,galstyan,gversteeg_t,versteeg} 

\appendix

\clearpage
\counterwithin{figure}{section}

\section*{Supplementary Material for ``Maximally Informative Hierarchical Representations of High-Dimensional Data''}

\section{Proof of Theorem~\ref{tclb}}\label{sec:lower}
\begin{theorem*}
\emph{Basic Decomposition of Information}

If $Y$ is a representation of $X$ and we define,
\begin{align*} 
TC_{L} (X; Y) &\equiv  \sum_{i=1}^n I(Y:X_i) - \sum_{j=1}^m I(Y_j:X),
\end{align*}
then the following bound and decomposition holds. 
\begin{align*} 
TC(X) \geq TC(X;Y) = TC(Y) + TC_L(X;Y)
\end{align*}
\end{theorem*}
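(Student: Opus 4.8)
The plan is to establish the inequality and the equality in Eq.~\ref{eq:basic} separately, both by direct manipulation of the definitions from Section~\ref{sec:background}; no estimates or limiting arguments are needed. The inequality $TC(X) \geq TC(X;Y)$ is immediate: by definition (Eq.~\ref{eq:tcxy}) $TC(X;Y) = TC(X) - TC(X|Y)$, and $TC(X|Y) = \mathbb E_X[D_{KL}(p(x|y)\,\|\,\prod_i p(x_i|y))] \geq 0$ as an average of KL divergences. So the real content is the decomposition $TC(X;Y) = TC(Y) + TC_L(X;Y)$.

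For the equality I would start from the expansion $TC(X;Y) = \sum_{i=1}^n I(X_i:Y) - I(X:Y)$ (Eq.~\ref{eq:tcxy}) together with the definition $TC_L(X;Y) = \sum_{i=1}^n I(Y:X_i) - \sum_{j=1}^m I(Y_j:X)$. Subtracting, the terms $\sum_{i=1}^n I(X_i:Y)$ cancel exactly, leaving
\[
TC(X;Y) - TC_L(X;Y) = \sum_{j=1}^m I(Y_j:X) - I(Y:X).
\]
The right-hand side is precisely $TC(Y;X)$, the reduction of total correlation of $Y$ when conditioning on $X$, which by Eq.~\ref{eq:tcxy} with the roles of the two argument groups exchanged equals $TC(Y) - TC(Y|X)$.

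It then remains to show $TC(Y|X) = 0$, and this is exactly where the hypothesis that $Y$ is a \emph{representation} of $X$ is used. By the definition of conditional total correlation, $TC(Y|X) = \sum_{j=1}^m H(Y_j|X) - H(Y|X) = \mathbb E_X[D_{KL}(p(y|x)\,\|\,\prod_{j=1}^m p(y_j|x))]$. But a representation is by definition one for which $p(x,y) = \prod_{j=1}^m p(y_j|x)\,p(x)$, i.e. the $Y_j$ are mutually conditionally independent given $X$, so this divergence vanishes for every $x$. Hence $TC(Y;X) = TC(Y)$, and combining with the previous display gives $TC(X;Y) = TC(Y) + TC_L(X;Y)$.

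I do not expect a genuine obstacle: every step is a definitional rewriting. The only point demanding care is the asymmetry of the operator $TC(\,\cdot\,;\,\cdot\,)$ — one must apply the identity $TC(A;B) = \sum_i I(A_i:B) - I(A:B)$ once with $(A,B)=(X,Y)$ and once with $(A,B)=(Y,X)$, and track which conditioning (on $Y$ versus on $X$) appears where. Recognizing that the defining property of a representation is literally the statement $TC(Y|X)=0$ is the conceptual step that makes all the bookkeeping collapse.
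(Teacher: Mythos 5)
Your proof is correct and follows essentially the same route as the paper: both arguments subtract $TC_L(X;Y)$ from $TC(X;Y)$ (via Eq.~\ref{eq:tcxy}) and reduce the claim to the identity $I(X:Y) = \sum_{j=1}^m I(Y_j:X) - TC(Y)$, which rests on the factorization $p(y|x)=\prod_j p(y_j|x)$. The only cosmetic difference is that you phrase this key step as recognizing the residual as $TC(Y;X)$ with $TC(Y|X)=0$, whereas the paper carries out the equivalent log-manipulation inside the expectation (Eq.~\ref{eq:identity}); the content is identical.
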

\begin{proof}
The first inequality trivially follows from Eq.~\ref{eq:tcxy} since we subtract a non-negative quantity (a KL divergence) from $TC(X)$. For the second equality, we begin by using the definition of $TC(X;Y)$, expanding the entropies in terms of their definitions as expectation values.  We will use the symmetry of mutual information, $I(A:B) = I(B:A)$, and the identity $I(A:B) = \mathbb E_{A,B} \log (p(a|b)/p(a))$. By definition, the full joint probability distribution can be written as $p(x,y) = p(y|x) p(x) = \prod_j p(y_j|x) p(x)$. 
\begin{align}\label{eq:identity}
I(X:Y) &=  \mathbb E_{X,Y} \left[\log \frac{p(y|x)}{p(y)}  \right]   \notag  \\
&= \mathbb E_{X,Y} \left[ \log  \frac{\prod_{j=1}^m p(y_j)}{p(y)} \frac{\prod_{j=1}^m p(y_j|x)}{\prod_{j=1}^m p(y_j)} \right] \notag \\
&= - TC(Y) + \sum_{j=1}^m I(Y_j : X)
\end{align}
Replacing $I(X:Y)$ in Eq.~\ref{eq:tcxy} completes the proof. 
\end{proof}

\section{Proof of Theorem~\ref{tcub}}\label{sec:upper}
\begin{theorem*}
\emph{Upper Bounds on $TC(X)$}

If $Y^{1:r}$ is a hierarchical representation of $X$ and we define $Y^0 \equiv X$, and additionally $m_r=1$ and all variables are discrete, then, 
\benn 
TC(X) &\leq  TC(Y^1) + TC_L(X;Y^1) + \sum_{i=1}^n H(X_i|Y^1)\\
TC(X) &\leq  \sum_{k=1}^r \left( TC_L(Y^{k-1};Y^k) + \sum_{i=1}^{m_{k-1}} H(Y^{k-1}_i|Y^k) \right).
\eenn
\end{theorem*}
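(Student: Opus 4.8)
The plan is to first prove the one-layer statement $TC(X) \leq TC(Y^1) + TC_L(X;Y^1) + \sum_{i=1}^n H(X_i|Y^1)$ and then telescope it up the hierarchy exactly as in the proof of Thm.~\ref{hierarchy}.

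For the one-layer bound I would start from the rearrangement of Eq.~\ref{eq:tcxy}, namely $TC(X) = TC(X;Y^1) + TC(X|Y^1)$, and apply Thm.~\ref{basic} to rewrite $TC(X;Y^1) = TC(Y^1) + TC_L(X;Y^1)$. It then remains to bound the conditional total correlation $TC(X|Y^1) = \sum_{i=1}^n H(X_i|Y^1) - H(X|Y^1)$. Since all variables are discrete, the conditional Shannon entropy $H(X|Y^1)$ is non-negative, so $TC(X|Y^1) \leq \sum_{i=1}^n H(X_i|Y^1)$, which gives the one-layer claim.

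Next I would iterate. Because $Y^k$ is a representation of $Y^{k-1}$ for each $k$, the one-layer bound applies verbatim with $X$ replaced by $Y^{k-1}$ and $Y^1$ by $Y^k$, i.e. $TC(Y^{k-1}) \leq TC(Y^k) + TC_L(Y^{k-1};Y^k) + \sum_{i=1}^{m_{k-1}} H(Y^{k-1}_i|Y^k)$. Starting from $k=1$ (with $Y^0 \equiv X$) and repeatedly substituting the bound for $TC(Y^k)$ into the right-hand side, the $TC(Y^k)$ terms telescope, leaving $\sum_{k=1}^r \bigl( TC_L(Y^{k-1};Y^k) + \sum_{i=1}^{m_{k-1}} H(Y^{k-1}_i|Y^k) \bigr) + TC(Y^r)$. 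Finally, $m_r = 1$ forces $TC(Y^r) = 0$ (the total correlation of a single variable vanishes), so this last term drops and we obtain the stated inequality.

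I do not expect a serious obstacle here; the argument is essentially bookkeeping on top of Thm.~\ref{basic}. The one place to be careful is the use of discreteness: the step $TC(X|Y^1) \leq \sum_i H(X_i|Y^1)$ relies on $H(X|Y^1) \geq 0$, which fails for differential entropy, so the hypothesis that all variables are discrete is genuinely needed — as the paper notes, a variant holds in the continuous case but the clean form above does not. One should also check that each substitution is legitimate, i.e.\ that ``$Y^k$ is a representation of $Y^{k-1}$'' is exactly the hypothesis required to invoke Thm.~\ref{basic} at that level, which it is by the definition of a hierarchical representation.
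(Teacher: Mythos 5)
Your proposal is correct and follows essentially the same route as the paper: decompose $TC(X) = TC(X|Y^1) + TC(Y^1) + TC_L(X;Y^1)$, bound $TC(X|Y^1) \leq \sum_i H(X_i|Y^1)$ using non-negativity of discrete conditional entropy, then iterate up the hierarchy and drop $TC(Y^r)=0$ since $m_r=1$. Your added remark about why discreteness is genuinely needed matches the paper's own caveat.
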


\begin{proof}
We begin by re-writing Eq.~\ref{eq:basic} as $TC(X) = TC(X|Y^1)  + TC(Y^1) + TC_L(X;Y^1)$. 
Next, for discrete variables, $TC(X|Y^1) \leq \sum_i H(X_i|Y)$, giving the inequality in the first line.
The next inequality follows from iteratively applying the first inequality as in the proof of Thm.~\ref{hierarchy}. Because $m_r=1$, we have $TC(Y^r) =0$.
\end{proof}

\section{Derivation of Eqs.~\ref{eq:label} and \ref{eq:marg}}\label{sec:derive}

We want to optimize the objective in Eq.~\ref{eq:AI}.
\be
\max_{p(y|x)} & \sum_{i=1}^n \alpha_i I(Y:X_i) - I(Y:X) \\
\textrm{s.t.} & \sum_{y} p(y|x) = 1
\ee
For simplicity, we consider only a single $Y_j$ and drop the $j$ index. Here we explicitly include the condition that the conditional probability distribution for $Y$ should be normalized. We consider $\alpha$ to be a fixed constant in what follows. 

We proceed using Lagrangian optimization.
We introduce a Lagrange multiplier $\lambda(x)$ for each value of $x$ to enforce the normalization constraint and then reduce the constrained optimization problem to the unconstrained optimization of the objective $\mathcal L$.  
\benn
\mathcal L = \sum_{\bar x,\bar y} p(\bar x) p(\bar y|\bar x)   \big(\sum_i \alpha_i (\log p(\bar y|\bar x_i)-\log p(\bar y))
\\  - (\log p(\bar y|\bar x) - \log p(\bar y))   \big)  
\\  + \sum_{\bar x} \lambda(\bar x) (\sum_{\bar y} p(\bar y| \bar x) -1)
\eenn
Note that we are optimizing over $p(y|x)$ and so the marginals $p(y|x_i),p(y)$ are actually linear functions of $p(y|x)$.  
Next we take the functional derivatives with respect to $p(y|x)$ and set them equal to 0. We re-use a few identities. Unfortunately, $\delta$ on the left indicates a functional derivative while on the right it indicates a Kronecker delta. 
\benn
\frac{\delta p(\bar y | \bar x)}{\delta p(y|x)} &= \delta_{y,\bar y} \delta_{x,\bar x} \\
\frac{\delta p(\bar y)}{\delta p(y|x)} &= \delta_{y,\bar y} p(x) \\ 
\frac{\delta p(\bar y| \bar x_i)}{\delta p(y|x)} &= \delta_{y,\bar y} \delta_{x_i,\bar x_i} p(x)/p(x_i)
\eenn
Taking the derivative and using these identities, we obtain the following.
\benn
\frac{\delta \mathcal L}{\delta p(y|x)} &= \lambda(x) + \\
&p(x) \log \frac{\prod_i (p(y|x_i)/p(y))^{\alpha_i}}{p(y|x)/p(y)} + \\
& \sum_{\bar x,\bar y} p(\bar x) p(\bar y|\bar x)   \big(\sum_i \alpha_i (\frac{\delta_{y,\bar y} \delta_{x_i,\bar x_i} p(x)}{p(x_i)  p(\bar y|\bar x_i)} \\
& -\delta_{y,\bar y} p(x)/ p(\bar y))
\\ & - (\frac{\delta_{y,\bar y} \delta_{x,\bar x} }{ p(\bar y|\bar x)} - \delta_{y,\bar y} p(x) / p(\bar y))   \big) 
\eenn
Performing the sums over $\bar x, \bar y$ leads to cancellation of the last three lines. Then we set the remaining quantity equal to zero. 
\benn
\frac{\delta \mathcal L}{\delta p(y|x)} &= \lambda(x) + 
p(x) \log \frac{\prod_i p(y|x_i)/p(y)}{p(y|x)/p(y)} = 0
\eenn
This leads to the following condition in which we have absorbed constants like $\lambda (x)$ in to the partition function, $Z(x)$.
\benn
p(y|x) &= \frac1{Z(x)} p(y) \prod_{i=1}^n \left(\frac{p(y|x_i)}{p(y)}\right)^{\alpha_{i}}
\eenn
We recall that this is only a formal solution since the marginals themselves are defined in terms of $p(y|x)$.
\benn
p(y) &= \sum_{\bar x} p(\bar x) p(y|\bar x) \\
 p(y|x_i) &= \sum_{\bar x} p(y|\bar x) p(\bar x) \delta_{x_i, \bar x_i} /p(x_i)
\eenn
If we have a sum over independent objectives like Eq.~\ref{eq:fullopt} for $j=1,\ldots,m$, we just place subscripts appropriately. 
The partition constant, $Z_j(x)$ can be easily calculated by summing over just $|Y_j|$ terms. 

\section{Updates Do Not Decrease the Objective}\label{sec:Z}
The detailed proof of this largely follows the convergence proof for the iterative updating of the information bottleneck~\cite{tishby}.
\begin{theorem}
Assuming $\alpha_1,\ldots,\alpha_n  \in [0,1]$, iterating over the update equations given by Eq.~\ref{eq:update} and Eq.~\ref{eq:marg} never decreases the value of the objective in Eq.~\ref{eq:AI} and is guaranteed to converge to a stationary fixed point.
\end{theorem}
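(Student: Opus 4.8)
The plan is to recast the iteration of Eq.~\ref{eq:update}--Eq.~\ref{eq:marg} as a coordinate-ascent (double maximization) on an auxiliary ``free energy'' functional, mirroring the convergence proof for the information bottleneck under the dictionary: the latent $Y$ plays the role of the bottleneck variable, each input coordinate $X_i$ plays the role of a relevance variable, and the weights $\alpha_i$ play the role of the trade-off parameter $\beta$ (one per relevance variable). Writing $\mathbb{E}[\cdot]$ for $\mathbb{E}_{p(x)p(y|x)}[\cdot]$, I would introduce a variational marginal $q(y)$ and, for each $i$, a variational \emph{posterior} $q_i(x_i\mid y)$ --- a Bayes'-rule reparametrization of $p(y\mid x_i)$, not $q_i(y\mid x_i)$ directly; this choice is what keeps the bound free of the unknown learned marginals --- and define
\[
\mathcal{G}\big[p(y|x);q;\{q_i\}\big] \equiv \sum_{i=1}^{n}\alpha_i\Big(H(X_i)+\mathbb{E}\big[\log q_i(x_i|y)\big]\Big)-\mathbb{E}\Big[\log\tfrac{p(y|x)}{q(y)}\Big].
\]
Two applications of Gibbs' inequality give $\mathcal{G}\le AI_\alpha(X;Y)$ for every choice of auxiliary quantities, with equality exactly when $q(y)=p(y)$ and $q_i(x_i|y)=p(x_i|y)$; here $\alpha_i\ge0$ is used only to fix signs, and the fact that $X$ determines $X_i$ makes each $H(X_i)$ a constant. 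Thus $\max_{q,\{q_i\}}\mathcal{G}=AI_\alpha(X;Y)$, and this maximizing step is precisely Eq.~\ref{eq:marg} read through Bayes' rule.

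Next I would verify that the other block maximization reproduces Eq.~\ref{eq:update}. For fixed $q,\{q_i\}$, collecting terms shows that, up to an additive constant in $p$, $\mathcal{G}=\sum_x p(x)\big(\log\sum_y q(y)\prod_i q_i(x_i|y)^{\alpha_i}-D_{KL}(p(\cdot|x)\,\|\,r_x)\big)$ with $r_x(y)\propto q(y)\prod_i q_i(x_i|y)^{\alpha_i}$. Since $-D_{KL}(p(\cdot|x)\,\|\,\cdot)$ is concave in $p(\cdot|x)$, the unique maximizer over $p(y|x)$ is $p(y|x)=r_x(y)$. This is the same stationarity condition as in Sec.~\ref{sec:derive}, but now the right-hand side is held fixed, so there is no self-consistency circularity and the critical point is a genuine maximum; substituting $q=p^t(y)$ and $q_i(x_i|y)=p^t(y|x_i)p^t(x_i)/p^t(y)$ and absorbing the $y$-independent factors $\prod_i p^t(x_i)^{\alpha_i}$ into the partition function recovers exactly Eq.~\ref{eq:update}.

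Monotonicity and convergence then follow the standard pattern. Alternating the two maximizations gives a non-decreasing sequence of values of $\mathcal{G}$, and since $\mathcal{G}$ equals $AI_\alpha(X;Y)$ immediately after every $q$-update, the objective $AI_\alpha(X;Y^t)$ is itself non-decreasing. It is bounded above: with $\alpha_i\in[0,1]$ one has $AI_\alpha(X;Y)\le\sum_i I(Y:X_i)-I(Y:X)=TC_L(X;Y)\le TC(X)<\infty$ by Cor.~\ref{tclb} (more crudely $AI_\alpha\le(\sum_i\alpha_i)\log|\mathcal{Y}|$ since $Y$ is discrete). A bounded monotone sequence converges. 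For the iterates themselves, the update map is continuous on the compact product of probability simplices (finite latent alphabet, and --- in the empirical version --- finitely many samples), $AI_\alpha$ serves as a Lyapunov function, and the usual argument gives that every limit point is simultaneously stationary for both block maximizations, hence a fixed point of Eqs.~\ref{eq:update}--\ref{eq:marg}.

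I expect the main obstacle to be the first step: identifying the right functional. A version built from $q_i(y|x_i)$ rather than the Bayes-transformed $q_i(x_i|y)$ reintroduces the unknown $H(Y)$ (or $p(y)$) and the coefficient bookkeeping involving $\sum_i\alpha_i$ no longer closes, so pinning down the reparametrization that makes \emph{both} block maximizations clean \emph{and} makes the $p$-update coincide with Eq.~\ref{eq:update} is the crux. A secondary, routine subtlety is that alternating maximization of a merely separately-concave (not jointly concave) functional yields convergence of the objective value together with a stationary limit point, so ``stationary fixed point'' is to be read in that sense rather than as a global optimum.
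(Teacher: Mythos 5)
Your proposal is correct and follows essentially the same route as the paper: your functional $\mathcal G$ is exactly the paper's $\mathcal F[p(x_i|y),p(y),p(y|x)]$ (the marginals treated as separate variational arguments, \`a la the information-bottleneck convergence proof), the two block updates reproduce Eqs.~\ref{eq:update} and \ref{eq:marg}, and boundedness via $\alpha_i\le 1$ and $TC_L(X;Y)\le TC(X)$ gives monotone convergence to a stationary point. The only (minor) difference is that you certify each block update as a maximizer via Gibbs' inequality and a KL rearrangement, whereas the paper does so via functional derivatives plus separate concavity.
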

\begin{proof}
 First, we define a functional of the objective with the marginals considered as separate arguments.
\benn
 \lefteqn{\mathcal F[p(x_i|y), p(y), p(y|x)] \equiv} \\
&& \sum_{x,y} p(x) p(y|x) \left( \sum_i \alpha_i \log \frac{p(x_i|y)}{p(x_i)} - \log \frac{p(y|x)}{p(y)}\right) 
 \eenn
As long as $\alpha_i \leq 1$, this objective is upper bounded by $TC_L(X;Y)$ and Thm.~\ref{hierarchy} therefore guarantees that the objective is upper bounded by the constant $TC(X)$.
 Next, we show that optimizing over each argument separately leads to the update equations given. 
 We skip re-calculation of terms appearing in Sec.~\ref{sec:derive}.  Keep in mind that for each of these separate optimization problems, we should introduce a Lagrange multiplier to ensure normalization.
{\allowdisplaybreaks
\begin{align*}
  \frac{\delta \mathcal F}{\delta p(y)} &= \lambda +  \sum_{\bar x} p(y|\bar x) p(\bar x) / p(y)  \\
\frac{\delta \mathcal F}{\delta p(x_i|y)} &=  \lambda_i + \sum_{\bar x} p(y|\bar{x}) p(\bar x) \alpha_i \delta_{\bar x_i, x_i}/p(x_i|y) \\ 
\frac{\delta \mathcal F}{\delta p(y|x)} &= \lambda(x) + p(x) \left( \sum_i \alpha_i \log \frac{p(x_i|y)}{p(x_i)} - \log \frac{p(y|x)}{p(y)} - 1\right)
\end{align*}
}
 Setting each of these equations equal to zero recovers the corresponding update equation. 
 Therefore, each update corresponds to finding a local optimum. 
 Next, note that the objective is (separately) concave in both $p(x_i|y)$ and $p(y)$, because $\log$ is concave. Furthermore, the terms including $p(y|x)$ correspond to the entropy $H(Y|X)$, which is concave. 
Therefore each update is guaranteed to increase the value of the objective (or leave it unchanged). Because the objective is upper bounded this process must converge (though only to a local optimum, not necessarily the global one). 
\end{proof}

\section{Convergence for S\&P 500 Data}\label{sec:restart}

Fig.~\ref{fig:convergence} shows the convergence of the lower bound on $TC(X)$ as we step through the iterative procedure in Sec.~\ref{sec:update} to learn a representation for the finance data in Sec.~\ref{sec:experiments}. As in the synthetic example in Fig.~\ref{fig:multiple}(a), convergence occurs quickly. The iterative procedure starts with a random initial state. Fig.~\ref{fig:convergence} compares the convergence for 10 different random initializations. In practice, we can always use multiple restarts and pick the solution that gives the best lower bound.

\begin{figure}[htbp] 
   \centering
  \includegraphics[width=0.93\columnwidth]{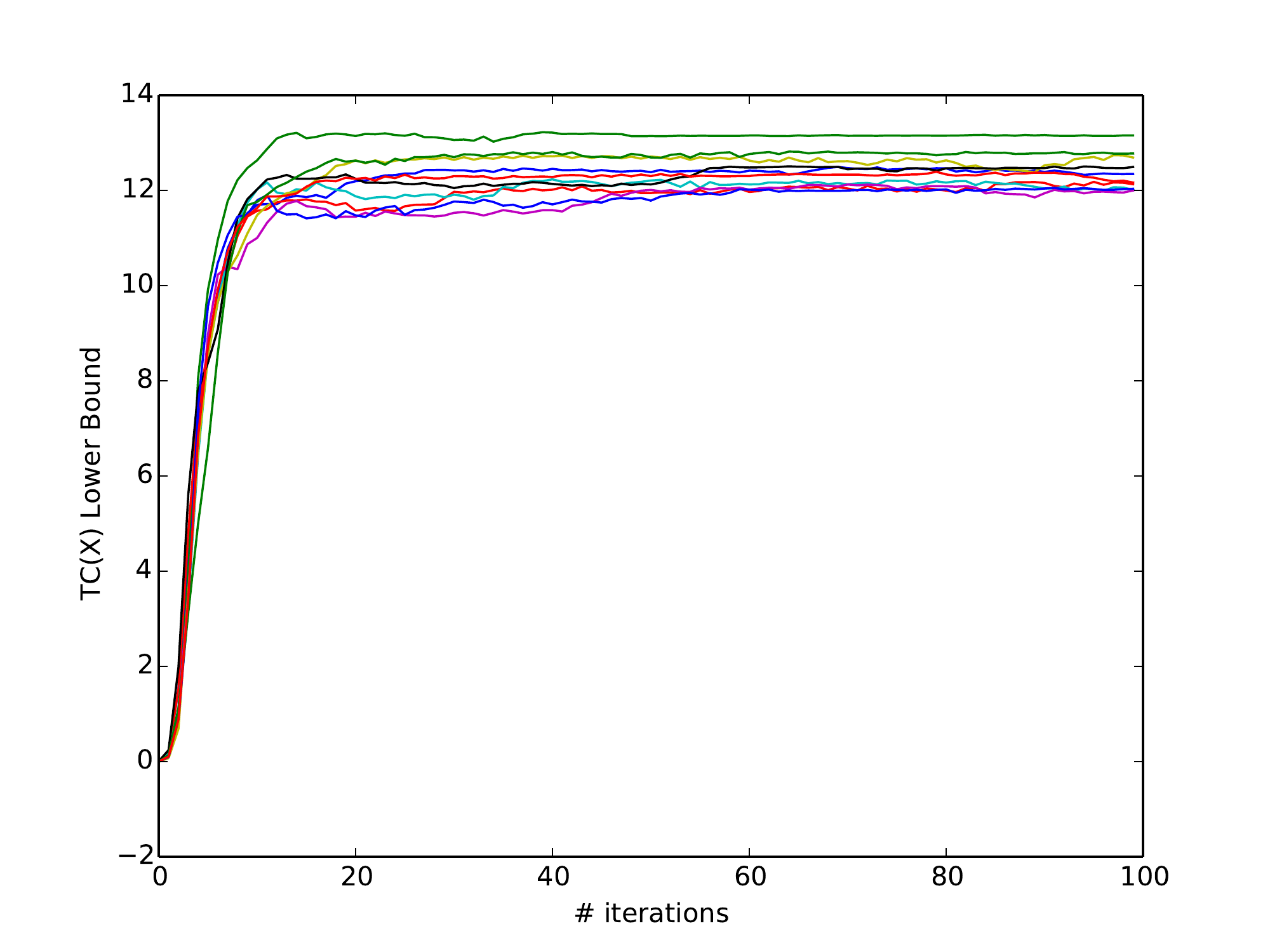} 
   \caption{Convergence of the lower bound on $TC(X)$ as we perform our iterative solution procedure, using multiple random initializations. }
   \label{fig:convergence}
\end{figure}

%

\clearpage
\begin{sidewaysfigure}[ht]
\centering
    \includegraphics[width=9in]{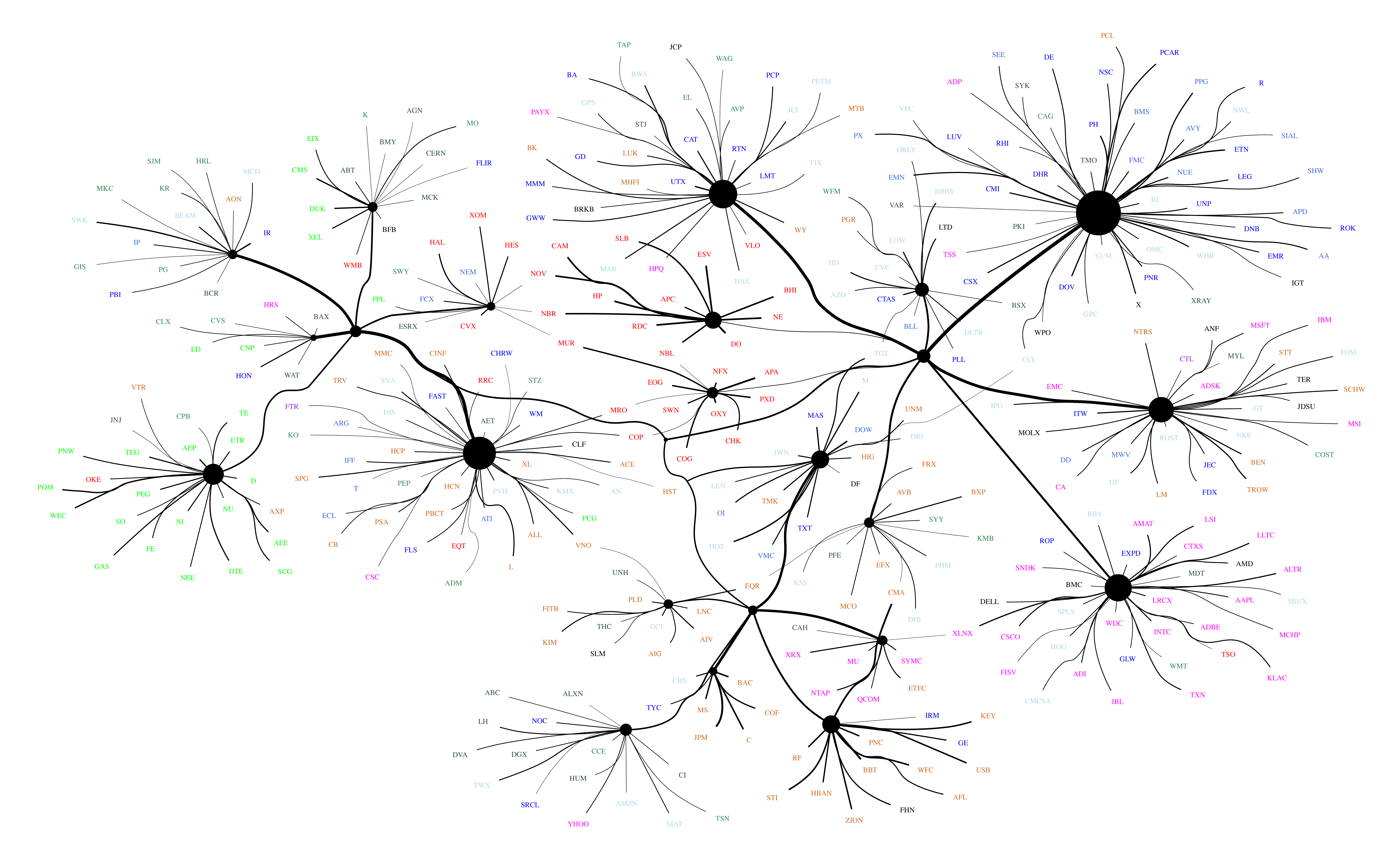} \\
       \includegraphics[width=8in]{Figs/key3.pdf}
    \caption{A larger version of the graph in Fig.~\ref{fig:spgraph} with a lower threshold on for displaying edge weights (color online). }
    \vspace{2in}
    \label{fig:big}
\end{sidewaysfigure}

\clearpage
\begin{sidewaysfigure}[ht]
\centering
    \includegraphics[width=6in]{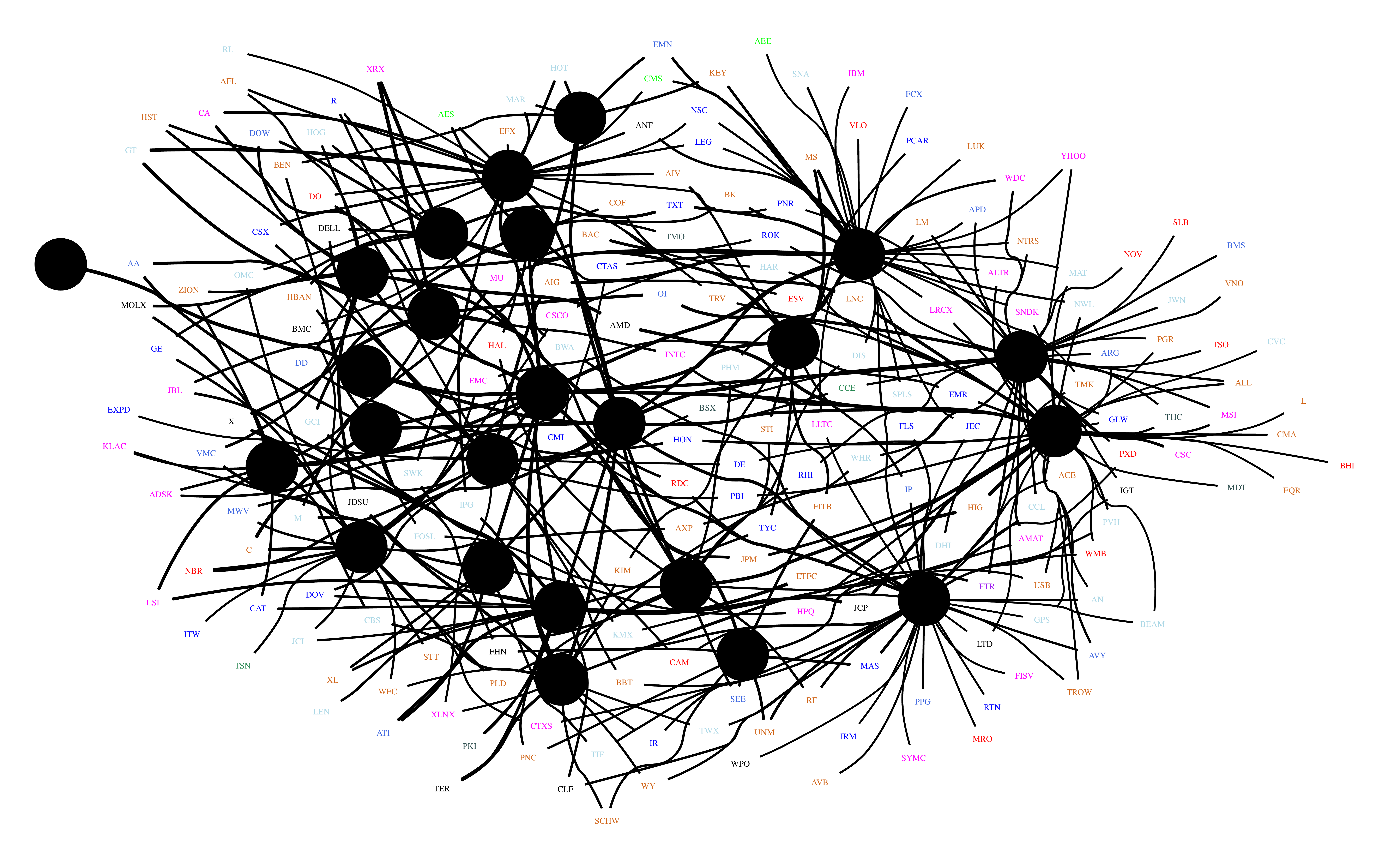}  \includegraphics[width=3in]{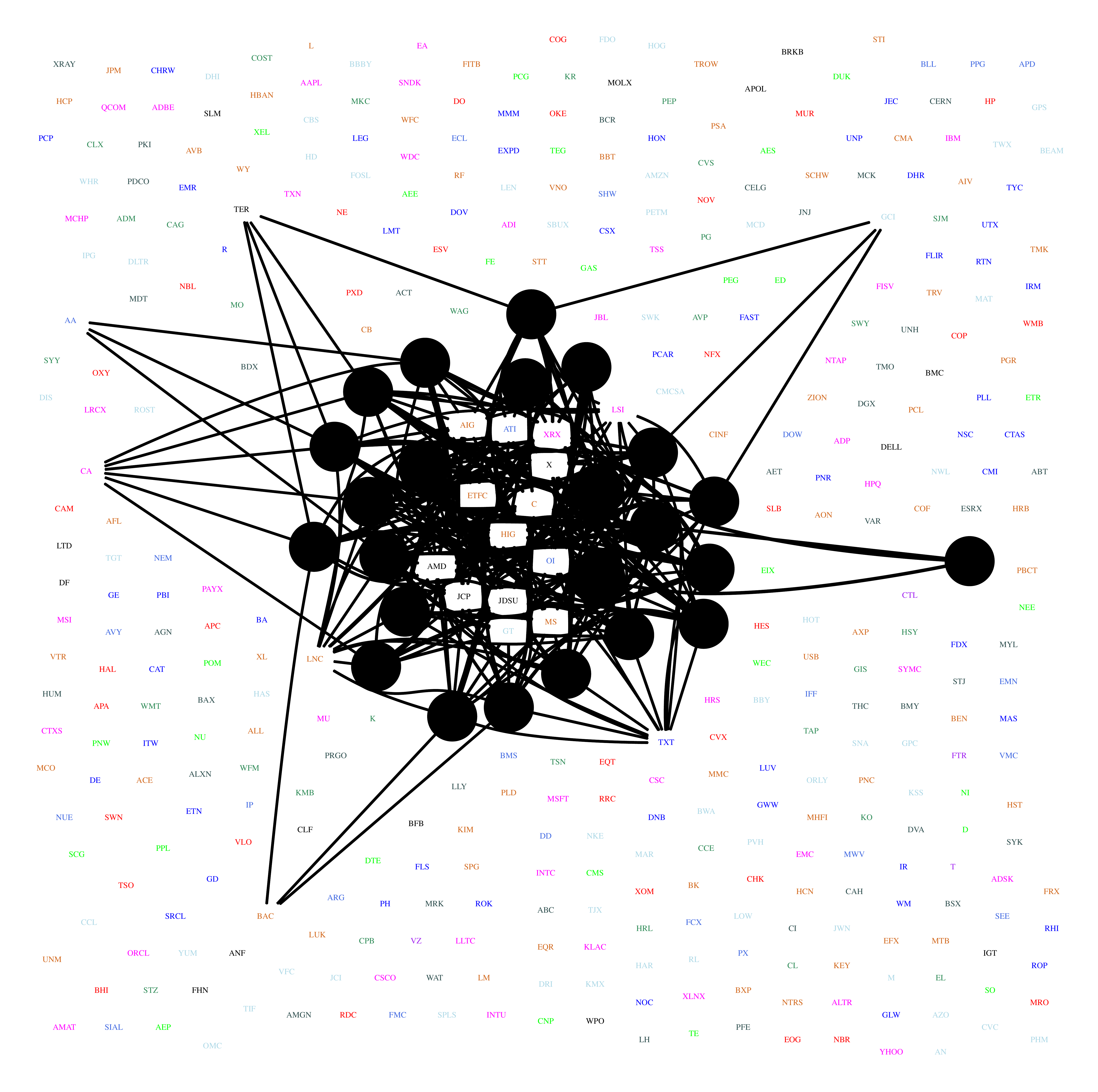} \\
       \includegraphics[width=8in]{Figs/key3.pdf}
    \caption{For comparison, we constructed a structure similar to Fig.~\ref{fig:big} using restricted Boltzmann machines with the same number of layers and hidden units. On the right, we thresholded the (magnitude) of the edges between units to get the same number of edges (about 400). On the left, for each unit we kept the two connections with nodes in higher layers that had the highest magnitude and restricted nodes to have no more than 50 connections with lower layers (color online).}
    \label{fig:nnet}
    \vspace{2in}
\end{sidewaysfigure}

\end{document}